\newif\ifnips\nipsfalse
\renewcommand{\phi}{\varphi}
\newcommand{\R}{\mathbb{R}}
\newcommand{\cD}{\mathcal{D}}
\newcommand{\cN}{\mathcal{N}}
\newcommand{\cF}{\mathcal{F}}
\newcommand{\cX}{\mathcal{X}}
\def\ds1{\mathds{1}}
\renewcommand{\epsilon}{\varepsilon}
\newcommand{\eps}{\epsilon}
\newcommand{\wh}{\widehat}
\newcommand{\wt}{\widetilde}
\newcommand{\argmin}{\mathop{\mathrm{arg\,min}}}
\newlength{\minipagewidth}
\newcommand{\beq}{\begin{equation}}
\newcommand{\eeq}{\end{equation}}
\newcommand{\beqa}{\begin{eqnarray}}
\newcommand{\eeqa}{\end{eqnarray}}
\newcommand{\beqan}{\begin{eqnarray*}}
\newcommand{\eeqan}{\end{eqnarray*}}
\def\ba#1\ea{\begin{align*}#1\end{align*}} 
\def\banum#1\eanum{\begin{align}#1\end{align}} 
\newcommand{\D}{\mathcal{D}}
\newcommand{\norm}[1]{\|#1\|}
\newcommand{\FF}{\mathcal{F}}
\newcommand{\Rbb}{\mathbb{R}}
\newcommand{\Uc}{\mathcal{U}}
\newcommand{\proj}{\mathrm{proj}}
\newcommand{\define}[4][ignore]{%
  \ifstrequal{#1}{ignore}{}{
  \@namedef{thmtitle@#2}{#1}}%
  \@namedef{thm@#2}{#4}%
  \@namedef{thmtypen@#2}{lemma}%
  \newtheorem{thmtype@#2}[theorem]{#3}%
  \newtheorem*{thmtypealt@#2}{#3~\ref{#2}}%
}
\newcommand{\state}[1]{%
  \@namedef{curthm}{#1}
  \@ifundefined{thmtitle@#1}{
  \begin{thmtype@#1}
    }{
  \begin{thmtype@#1}[\@nameuse{thmtitle@#1}]
  }
    \label{#1}
    \@nameuse{thm@#1}
  \end{thmtype@#1}
  \@ifundefined{thmdone@#1}{
  \@namedef{thmdone@#1}{stated}%
  }{}
}
\newcommand{\restate}[1]{%
  \@namedef{curthm}{#1}
  \@ifundefined{thmtitle@#1}{
    \begin{thmtypealt@#1}
    }{
  \begin{thmtypealt@#1}[\@nameuse{thmtitle@#1}]
  }
    \@nameuse{thm@#1}
  \end{thmtypealt@#1}
  \@ifundefined{thmdone@#1}{
  \@namedef{thmdone@#1}{stated}%
  }{}
}
\newcommand{\thmlabel}[1]{
  \@ifundefined{thmdone@\@nameuse{curthm}}{\label{#1}
    }{\tag*{\eqref{#1}}}
}
\newtheorem{theorem}{Theorem}[section]
\newtheorem{lemma}[theorem]{Lemma}
\newtheorem{definition}[theorem]{Definition}
\newtheorem{remark}[theorem]{Remark}
\DeclareMathOperator*{\E}{\mathbb{E}}
\let\P\undefined
\DeclareMathOperator*{\P}{\mathbb{P}}
\title{Adversarial examples from computational constraints}
\def\And{\and}\fi
\author{S\'ebastien Bubeck\\
       Microsoft Research\\
			 \texttt{sebubeck@microsoft.com}\\
			 \And
       Eric Price\\
       UT Austin\\
			 \texttt{ecprice@cs.utexas.edu}\\
			 \And
      Ilya Razenshteyn\\
       Microsoft Research\\
			 \texttt{ilyaraz@microsoft.com}\\
}
\begin{document}

\maketitle

\begin{abstract}
Why are classifiers in high dimension vulnerable to ``adversarial" perturbations?
We show that it is likely not due to information theoretic limitations, but rather it could be due to computational constraints.

First we prove that, for a broad set of classification tasks, the mere existence of a robust classifier implies that it can be found by a possibly exponential-time algorithm with relatively few training examples.
Then we give
a particular classification task where learning a
  robust classifier is computationally intractable.  More precisely we
  construct a binary classification task in high dimensional space
  which is (i) information theoretically easy to learn robustly for
  large perturbations, (ii) efficiently learnable (non-robustly) by a
  simple linear separator, (iii) yet is not efficiently robustly
  learnable, even for small perturbations, by any algorithm in the
  statistical query (SQ) model. This example gives an
  exponential separation between classical learning and robust
  learning in the statistical query model. It suggests that
  adversarial examples may be an unavoidable byproduct of
  computational limitations of learning algorithms.
\end{abstract}

\section{Introduction}
The most basic task in learning theory is to learn from a data set $(X_i, f(X_i))_{i \in [n]}$ a good approximation to the unknown input-output function $f$. One is typically interested in finding a hypothesis function $h$ with small out of sample probability of error. That is, assuming the $X_i$'s are i.i.d.\ from some distribution $D$, one wishes to approximately minimize $\P_{X \sim D}(h(X) \neq f(X))$. A more challenging task is to learn a {\em robust} hypothesis, that is one that would minimize the probability of error against {\em adversarially corrupted examples}. More precisely, assume that the input space is endowed with a norm $\|\cdot\|$ and let $\epsilon>0$ be a fixed robustness parameter. In robust learning the goal is to find $h$ to minimize:
\[
\P_{X \sim D}( \exists \ z \text{ such that } \|z\| \leq \epsilon, \text{ and } h(X+z) \neq f(X+z)) \,.
\]
Such an input $X+z$ in the above event is colloquially referred to as an {\em adversarial example}\footnote{In the literature one sometimes uses a more stringent definition of adversarial examples, where $X$ and $z$ are in addition required to satisfy $f(X+z)=f(X)$. We ignore this requirement here.}.
%
%

Following \citet{G14} there is a rapidly expanding literature exploring the vulnerability of neural networks to adversarially chosen perturbations. The surprising observation is that, say in vision applications, for most images $X \sim D$ the perturbation can be chosen in a way that is imperceptible to a human yet dramatically changes the output of state-of-the-art neural networks. This is a particularly important issue as these neural networks are currently being deployed in real-world situations. Naturally there is by now a large literature (in fact going back at least to \cite{Dalvi04, GR06}) on {\em attacks} (finding adversarial perturbations) and {\em defenses} (making classifiers robust against certain type of attacks).

While we have a sophisticated theory for the classical goal of
minimizing the non-robust probability of error, our understanding of
the robust scenario is still very rudimentary. At the moment, the
``attackers'' seem to be winning the arms race against the
``defenders'', see e.g., \cite{ACW18}.
We identify four mutually exclusive possibilities for why all known
classification algorithms are vulnerable to adversarial examples:

\begin{enumerate}
\item No robust classifier exists.
\item Identifying a robust classifier requires too much training data.
\item Identifying a robust classifier from limited training data is
  information theoretically possible but computationally intractable.
\item We just have not found the right algorithm yet.
\end{enumerate}

The goal of this paper is to provide two pieces of evidence, one in favor
of hypothesis 3 and one against hypothesis 2.  Our primary result is that
hypothesis 3 is indeed possible: there exist robust classification tasks that are
information theoretically easy but computationally intractable under a
powerful model of computation (namely the statistical query model, see below).
Our secondary
result is evidence against hypothesis 2, showing that if a robust
classifier exists then it can be found with relatively few training
examples under a standard assumption on the data distribution (for
example, that the distribution within each label is close to a
Lipschitz generative model, or is drawn from a finite set of
exponential size).

In Section \ref{sec:relatedworks} we discuss related work on adversarial examples in light of those four hypotheses. In Section \ref{sec:SQmodel} we introduce the model of computation under which we will prove intractability.  We conclude the introduction with Section \ref{sec:overview} where we give a brief proof overview for our primary and secondary result. These results are discussed in greater depth respectively in Section \ref{lower_bound_section} and Section \ref{upper_bound_section}.




\subsection{Related work on adversarial examples} \label{sec:relatedworks}%
To the best of our knowledge, previous works have not linked
computational constraints to adversarial examples, but instead have
focused on the other three hypotheses.

Supporting hypothesis 1 is the work of \citet{FFF18}.  Here the authors
consider a generative model for the features, namely $X = g(r)$ where
$r \in \R^d$ is sampled from an isotropic Gaussian (in particular it
is typically of Euclidean norm roughly $\sqrt{d}$). The observation is
that, due to Gaussian isoperimetry, no classifier is robust to
perturbations in $r$ of Euclidean norm $O(1)$. If $g$ is
$L$-Lipschitz, this corresponds to perturbations of the image $X$ of
at most $O(L)$.
On the other hand, evidence against
hypothesis 1 is the fact that humans seem to be robust classifiers
with low error rate (albeit nonzero error rate, as shown by examples
in~\cite{elsayed2018adversarial}).  This suggests that, to fit real distributions on images,
the Lipschitz parameter $L$ in the data model assumed in \cite{FFF18} may
be prohibitively large.

Another work arguing the inevitability of adversarial examples is
\citet{G18}.  There the authors propose a simple classification task,
namely distinguishing between samples on the unit sphere in high
dimension and samples on a sphere of radius $R$ bounded away from
$1$. They show experimentally that even in such a simple setup,
state-of-the-art neural networks have adversarial examples at most
points.
We note however that this example only applies to specific
classifiers, since it is easy to construct an efficient robust classifier for
the given example (e.g., just use a linear model on the norm of the
features); thus the ``hardness" here only appears for a given network
structure.

Supporting hypothesis 2 is the work of \citet{M18}.  Here the authors
consider a mixture of two separated Gaussians (isotropic, with means
at distance $\Theta(\sqrt{d})$). With such a separation
a single sample is sufficient to learn non-robustly; but to learn a classifier that
is robust to $O(1)$-size perturbations in $\ell_{\infty}$-norm one needs
$\Omega(\sqrt{d})$ samples. This polynomial separation suggests that
avoiding adversarial examples in high dimension requires a lot more
samples than mere learning---but only up to $\sqrt{d}$ samples.
In fact, since their hard instance is essentially a set of $2^d$ possible distributions, our secondary
result gives a black-box algorithm that would produce a robust classifier with $O(d)$ samples.


Finally the large body of work on ``adversarial defense" can be viewed as investigating hypothesis 4.
We note that, at the time of writing, the state of the art defense \citet{Madry18} (according to \cite{ACW18}) is still far from being robust. Indeed on the CIFAR-10 dataset its accuracy is below $50\%$ even with very small perturbations (of order $10^{-2}$ in $\ell_{\infty}$-norm), while state of the art non-robust accuracy is higher than $95\%$.

\subsection{The SQ model} \label{sec:SQmodel}
Proving computational hardness is a notoriously difficult problem. To circumvent this difficulty one usually either (i) reduces the problem at hand to a well-established computational hardness conjecture (e.g., proving NP-hardness), or (ii) proves an unconditional hardness within a limited computational framework (such as the oracle lower bounds in convex optimization, \cite{Nes04}). Our task here is further complicated by the {\em average-case} nature of the problem (the datasets are i.i.d.\ from some fixed distribution). Fortunately there is a growing set of results on computational hardness in learning theory that we can leverage. The statistical query (SQ) model of computation from \citet{Kearns98} is a particularly successful instance of approach (ii) for learning theory: (a) most known learning algorithms fall in the framework, including in particular logistic regression, SVM, stochastic gradient descent, etc; and (b) SQ-hardness has been proved for many interesting problems that are believed to be computationally hard, such as learning parity with noise \cite{Kearns98}, learning intersection of halfspaces \cite{KS}, the planted clique problem \cite{FGRVX}, robust estimation of high-dimensional Gaussians \cite{DKS17}, or learning a function computable by a small neural network~\cite{song2017complexity}.
Thus we naturally use this model to prove our main result on the computational hardness of robust learning. We now recall the definition of the SQ model and state informally our main result.


As Kearns put it in his original paper, the SQ model considers ``learning algorithms that construct a hypothesis based on statistical properties of large samples rather than on the idiosyncrasies of a particular sample". More precisely, rather than having access to a data set $(X_i,f(X_i))$, in the SQ model one must make queries to a $\tau$-SQ oracle which operates as follows: given a $[0,1]$-valued function $\psi$ defined on input/output pairs, the SQ oracle returns a value $\E_{X \sim \cD} \psi(X, f(X)) + \xi$ where $|\xi| \leq \tau$. We refer to $\tau$ as the {\em precision} of the oracle. Obviously, an algorithm using $T$ queries to an oracle with precision $\tau$ can be simulated using a data set of size roughly $T/\tau^2$. In our main result we consider an oracle with {\em exponential} precision. More concretely we take $\tau$ of order $\exp(- C d^c)$ where $d$ is the dimension of the problem and $c, C>0$ are some numerical constants. Observe that such a high precision oracle cannot be simulated with a polynomial (in $d$) number of samples. Yet we show that even with such a high precision one needs an exponential number of queries to achieve robust learning for a certain task which on the other hand is easy to learn, and information theoretically learnable robustly:

\begin{theorem}[informal] \label{th:main}
For any $M, \epsilon > 0$, there exists a classification task in $\R^d$ which is
\begin{itemize}
\item learnable in $\mathrm{poly}(d)$ time and $\mathrm{poly}(d)$ samples;
\item robustly learnable in $\mathrm{poly}(d)$ samples with $\ell_2$-robustness parameter $M$ (while with high probability all samples have $\ell_2$-norm $O(\sqrt{d})$);
\item not efficiently and robustly learnable in the statistical query model, in the sense that even with an exponential (in $d$) precision statistical query oracle one needs an exponential (in $d$) number of queries in order to robustly learn with robustness parameter $\epsilon$.
\end{itemize}
\end{theorem}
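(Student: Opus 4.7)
The plan is to construct a family of distributions $\{D_v\}_{v\in S^{d-1}}$ on $\R^d\times\{\pm 1\}$ indexed by a hidden unit vector $v$ orthogonal to a fixed ``observable'' direction $e_1$, together with a null distribution $D_0$, and then verify the three bullets separately. A sample $(X,y)\sim D_v$ is drawn by choosing $y\in\{\pm 1\}$ uniformly, then setting $X_1 = y\alpha + Z_1$ for a Gaussian $Z_1$ with parameters tuned so that $\mathrm{sgn}(X_1)$ has small non-robust error but is defeated by $\epsilon$-perturbations along $e_1$, drawing $X_v$ from one of two one-dimensional distributions $P_+,P_-$ whose supports lie at distance much larger than $M$ (so the $(e_1,v)$-marginal is $M$-robustly separable), and setting the remaining coordinates to be i.i.d.\ $\cN(0,1)$. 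The crucial property of $P_\pm$ is that they agree with $\cN(0,1)$ on all moments up to degree $k = d^{\Omega(1)}$; the null $D_0$ is obtained by replacing $P_\pm$ with $\cN(0,1)$.

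The first two bullets follow from standard arguments. Efficient non-robust learning is a Gaussian margin calculation on $X_1$: the classifier $\mathrm{sgn}(X_1)$, which any linear method will output, has error $\exp(-\Omega(\alpha^2))$. Information-theoretic robust learning with $\mathrm{poly}(d)$ samples follows from the abstract upper bound in Section~\ref{upper_bound_section}, since $\{D_v\mid y\}_{v\in S^{d-1}}$ admits a Lipschitz generative model (the map $v\mapsto D_v$ is Lipschitz in total variation); alternatively, $v$ can be recovered by robust ERM over an $\exp(O(d))$-size cover of $S^{d-1}$, after which the Bayes-optimal $M$-robust halfspace in the $(e_1,v)$-plane is immediate.

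The SQ lower bound is where the substance lies. I would apply the Statistical Query dimension framework of Feldman et al.\ and Diakonikolas--Kane--Stewart: fix a $1/2$-separated packing $\{v_1,\ldots,v_N\}$ of $S^{d-1}$ with $N=\exp(\Omega(d))$. The moment-matching property, together with the Hermite expansion of the $\chi^2$-divergence $\chi^2(D_{v_i}\,\|\,D_0)$, implies that pairs of hypotheses $D_{v_i},D_{v_j}$ have SQ-correlation (relative to $D_0$) at most $\langle v_i,v_j\rangle^{k+1}\leq \exp(-\Omega(d^c))$. This yields an SQ-dimension of $\exp(\Omega(d^c))$ for the distinguishing problem ``is the data drawn from $D_v$ for some $v$, or from $D_0$?'', so no SQ algorithm with exponentially fine precision and subexponentially many queries can solve it. Finally, any SQ algorithm that outputs an $\epsilon$-robustly accurate classifier on a typical $D_v$ is converted (via a single additional SQ query evaluating the robust loss) into such a distinguisher, because under $D_0$, where the $\perp e_1$ directions are rotationally symmetric and $X_1$ alone gives $\epsilon$-robust error near $1/2$, the Bayes-optimal $\epsilon$-robust error under $D_0$ is itself near $1/2$.

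The main obstacle I anticipate is this last reduction. The SQ-dimension toolkit and moment-matched 1D constructions are by now standard, but to argue that \emph{every} $\epsilon$-robust classifier must implicitly localize $v$ one needs a careful symmetry argument showing that under $D_0$ the Bayes-optimal $\epsilon$-robust error is exactly $1/2$ (not merely bounded away from optimal), combined with a stability estimate converting the accuracy gap between $D_v$ and $D_0$ into an SQ query with non-trivial advantage. A secondary technical point is the construction of $P_+,P_-$ with $M$-separated supports that nevertheless match $k$ Gaussian moments, which can be done by a Gaussian-quadrature / de la Vall\'ee-Poussin construction, but requires care because moments of $P_\pm$ grow like $M^{2k}$ and must be balanced against the Hermite-coefficient bound entering the SQ correlation.
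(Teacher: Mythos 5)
Your overall architecture (hidden planted direction, moment-matched one-dimensional distributions, an observable coordinate that makes non-robust learning trivial, and the FGRVX pairwise-correlation framework) is the right family of ideas and matches the paper's strategy at a high level. But there is a fatal quantitative gap in the construction itself: you plant the moment-matched pair $P_\pm$ along a \emph{single} hidden direction $v$. Two distributions on $\R$ that both match $N(0,1)$ in the first $m$ moments are concentrated on $[-O(\sqrt m), O(\sqrt m)]$ and, by the Chebyshev--Markov--Stieltjes inequalities (equivalently, by comparison with the Gauss--Hermite quadrature rules, whose nodes have spacing $\Theta(1/\sqrt m)$), their supports must interleave at scale $O(1/\sqrt m)$: any gap of width $\gg 1/\sqrt m$ in the bulk can be detected by a degree-$m$ polynomial. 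So the separation between the two classes along $v$ is at most $O(1/\sqrt m)$. Meanwhile, your SQ correlation bound $\langle v_i, v_j\rangle^{m+1}$ only becomes $\exp(-d^{\Omega(1)})$ --- which you need both for the exponential query lower bound and to tolerate exponentially fine precision $\tau$ --- when $m = d^{\Omega(1)}$. These two requirements collide: with $m = d^{\Omega(1)}$ matched moments the robust-feasibility radius is $O(d^{-\Omega(1)})$, not $\geq M$, so the second bullet of the theorem fails. You half-notice this tension (``moments of $P_\pm$ grow like $M^{2k}$ and must be balanced...''), but it is not a matter of care; it is an impossibility for $k=1$.

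The paper's resolution, which your proposal is missing, is to plant $k$ independent copies of the moment-matched pair in a random $k$-dimensional subspace $U$ with $k \gg m$ (both polynomial in $d$), drawn from a family of $2^{d^{\Theta(1)}}$ pairwise near-orthogonal subspaces. The per-coordinate separation $\Omega(1/\sqrt m)$ then amplifies to $\Omega(\sqrt{k/m})$ in $\ell_2$, which can be made an arbitrarily large constant (indeed up to $\log^{1/2-\eps}d$). The price is that the clean Hermite/Gegenbauer expansion giving correlation $\langle v_i,v_j\rangle^{m+1}$ is specific to $k=1$; for $k$-dimensional plants the paper must bound $\chi_{N(0,I_d)}(D_{U_1,A},D_{U_2,A})$ by a Taylor expansion of $a=A/G-1$ around the projection onto $U_1$, using derivative bounds $|a^{(l)}|\le m^{O(l+1)}$ and the near-orthogonality $\eps\le d^{-0.49}$ to kill all low-order terms via moment matching (Lemma~\ref{coeff_killing}) and control the remainder. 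Your remaining steps (the robust-ERM/covering upper bound, the linear classifier on the first coordinate, and the one-extra-query reduction from robust classification to distinguishing from $N(0,I_d)$) are consistent with the paper, but without the $k$-dimensional planting the theorem as stated cannot be reached.
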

The same result holds using the $\ell_\infty$ norm instead of $\ell_2$, except with diameter $O(\sqrt{d \log d})$.


Of course, a number of natural machine learning algorithms such as
nearest neighbor are not based on statistical queries.  Although we
cannot prove it, we believe that our input distributions are
computationally hard in general.  For the case of nearest neighbor,
the distance to points of each class have very similar
distributions---indeed, the two distributions match on polynomially
many moments.  This suggests that exponentially many samples are
necessary for nearest neighbor. For more information about nearest neighbor
classifiers in the context of adversarial examples, see~\cite{wang2017analyzing}.

Moreover, there are very few problems in any domain with exponential
SQ hardness for which polynomial time algorithms are known; in fact,
the only such problems involve solving systems of linear equations
over finite fields~\cite{feldman2017general}.  Since
Theorem~\ref{th:main} involves a real-valued problem, finding a
polynomial time algorithm that avoids the SQ lower bound would be a
remarkable breakthrough in SQ theory.

\subsection{Overview of proofs} \label{sec:overview}

Our secondary result, on the information theoretic achievability of robustness, is proved via
simple arguments reminiscent of PAC-learning theory.
Namely, if a
classifier is not good enough for a given pair of distributions, we
can rule it out with high confidence by looking at not too many
samples. Then, we use a union bound to claim the result for a family
of pairs that is either at most exponentially large, or is at least
covered by a net of at most exponential size (the only subtlety is in
the proper definition of a net in this robust context).

Our primarily result, on the hardness of robustness, is technically much more challenging.
The central object in the proof is a natural high-dimensional
generalization of a construction from~\citet{DKS17}. Roughly speaking,
a hard pair of distributions is obtained by taking a standard
multivariate Gaussian, choosing a random $k$-dimensional subspace and
planting there two well-separated distributions that match many
moments of a Gaussian (in~\cite{DKS17} only the case $k = 1$ is considered).
To show an SQ lower bound, we use -- as in~\cite{DKS17}
-- the framework of~\cite{Blum94,FGRVX} to reduce the question to
computing a certain non-standard notion of correlation between the
distributions. To bound said correlation, we deviate from~\cite{DKS17}
significantly, since their argument is tailored crucially to the case
$k = 1$. Our argument is less precise, but allows $k \gg 1$ which is necessary
to obtain a large separation between the distributions (which in turn controls the parameter $M$
in Theorem \ref{th:main}).

\section{Definitions}

Throughout we
restrict ourselves to binary classifiers, $\R^d$-feature space, as well as to balanced classes. We fix some norm $\|\cdot\|$ in $\R^d$, and we denote $B(\epsilon) = \{z \in \R^d : \|z\| \leq \epsilon\}$.

\begin{definition}
The $\epsilon$-robust zero-one loss (with respect to $\|\cdot\|$) is defined as follows, for $f : \R^d \rightarrow \{0,1\}$ and $(x,i) \in \R^d \times \{0,1\}$,
\[
\ell_{\epsilon}(f, x, i) = \ds1\{\exists \ z \in B(\epsilon) : f(x+z) \neq i\} \,.
\]
\end{definition}

\begin{definition}
A binary classifier $f: \R^d \to \{0, 1\}$  is $(\epsilon, \delta)$-robust for a pair of distributions $(D_0, D_1)$ on $\mathcal{X}$ if for any $i \in \{0,1\}$,
\[
\E_{X \sim D_i} [ \ell_{\epsilon}(f, X, i) ] \leq \delta \,.
\]
\end{definition}

\begin{definition}
  A (binary) \emph{classification task} is given by a family $\D$ of
  pairs of distributions $D=(D_0,D_1)$ over a domain $\mathcal{X}$.  A
  classification algorithm receives datasets $\underline{X}_0, \underline{X}_1$ consisting of
  $n$ i.i.d. samples from $D_0$ and $D_1$ respectively, and outputs a
  classifier $f: \R^d \to \{0, 1\}$.

 We say that $\D$ is $(\eps, \delta)$-robustly learnable with $n$ samples if there is a classification algorithm such that, for
  every $D \in \D$, with probability at least $2/3$ over $\underline{X}_0$ and $\underline{X}_1$, the algorithm produces a classifier $f$ that is $(\epsilon,\delta)$-robust for $D$.
\end{definition}

\begin{remark}
The success probability $2/3$ is an arbitrary constant larger than $1/2$. It is easy to see that, for any $\eta >0$, by using $O(n \log(1/\eta))$ samples one can obtain a success probability of $1-\eta$.

We also note that the classical $(\eps', \delta')$-PAC learning scenario, with $\delta'=1/3$, corresponds to our definition of $(\eps, \delta)$-robust classification with parameters $\eps=0$ and $\delta=\eps'$. Slightly more precisely, a concept class $\cF \subset \{0,1\}^{\R^d}$ for PAC-learning corresponds to the family $\cD$ of all pairs of distribution supported respectively on $f^{-1}(0)$ and $f^{-1}(1)$ for some $f \in \cF$.
\end{remark}

\begin{definition}
We say that $\D$ is $(\epsilon, \delta)$-robustly feasible if every $D \in \D$ admits an $(\epsilon, \delta)$-robust classifier. When it exists we denote $f_D$ for such a classifier (chosen arbitrarily among all robust classifiers for $D$), and $\FF_{\D} = \{f_D, D \in \D\}$.
\end{definition}

\section{Robust learning with few samples}
\label{upper_bound_section}
Obviously robust feasibility is a necessary condition for robust
learnability. We show that it is in fact sufficient, even for {\em
  sample efficient} robust learnability.  We first do so when a finite set
of classifiers $\FF_\D$ suffices for robust feasibility.

\subsection{Robust empirical risk minimization}

\begin{theorem} \label{thm:finiteUB}
Assume that $\D$ is $(\epsilon,\delta)$-robustly feasible. Then it is $(\epsilon, \delta+\delta')$-robustly learnable with $n=\Omega\left(\frac{\delta + \delta'}{\delta'^2} \log(|\cF_{\D}|)\right)$.
\end{theorem}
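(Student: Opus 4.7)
The plan is to analyze the \emph{robust empirical risk minimizer} over the finite hypothesis class $\cF_\D$. Given the two datasets $\underline{X}_0, \underline{X}_1$ of size $n$ each, for each $f \in \cF_\D$ and each $i \in \{0,1\}$ define the empirical robust loss
\[
\wh{L}_i(f) = \frac{1}{n}\sum_{X \in \underline{X}_i} \ell_\epsilon(f, X, i),
\]
and output $\wh{f} \in \argmin_{f \in \cF_\D} \max(\wh{L}_0(f), \wh{L}_1(f))$. Robust feasibility guarantees a reference classifier $f^\star = f_D \in \cF_\D$ with population robust loss $L_i(f^\star) := \E_{X\sim D_i}\ell_\epsilon(f^\star,X,i) \le \delta$ for both $i$.

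The key step is a sharp concentration argument. Since each summand $\ell_\epsilon(f,X,i)$ is a $\{0,1\}$-valued random variable with mean $L_i(f)$, its variance is at most $L_i(f)$. By Bernstein's inequality, for each fixed $f$, with probability at least $1-\eta/(2|\cF_\D|)$,
\[
\bigl|\wh{L}_i(f) - L_i(f)\bigr| \;\le\; C\left(\sqrt{\frac{L_i(f)\,\log(|\cF_\D|/\eta)}{n}} + \frac{\log(|\cF_\D|/\eta)}{n}\right).
\]
A union bound over $f \in \cF_\D$ and $i \in \{0,1\}$ makes this hold simultaneously, with the failure probability set to a small absolute constant (so $\log(|\cF_\D|/\eta) = O(\log |\cF_\D|)$ once $|\cF_\D|$ is large, and the $\log(1/\eta)$ term is absorbed). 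Applied to $f^\star$ (whose true loss is at most $\delta$) this bounds $\wh{L}_i(f^\star) \le \delta + O(\sqrt{\delta \log|\cF_\D|/n} + \log|\cF_\D|/n)$; applied to any $f$ with $\max_i L_i(f) > \delta + \delta'$, it gives $\max_i \wh{L}_i(f) \ge L_i(f) - O(\sqrt{L_i(f)\log|\cF_\D|/n} + \log|\cF_\D|/n)$.

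Combining these with the ERM property $\max_i \wh{L}_i(\wh f) \le \max_i \wh{L}_i(f^\star)$, any $\wh f$ with population robust loss exceeding $\delta+\delta'$ yields a contradiction, provided
\[
\delta' \;\gtrsim\; \sqrt{\frac{(\delta+\delta')\,\log|\cF_\D|}{n}} + \frac{\log|\cF_\D|}{n}.
\]
Solving for $n$ gives exactly the stated bound $n = \Omega\!\left(\frac{\delta+\delta'}{\delta'^2}\log|\cF_\D|\right)$, and we conclude that $\wh f$ is $(\epsilon, \delta+\delta')$-robust with probability at least $2/3$.

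The main conceptual subtlety (really the only one beyond bookkeeping) is using the Bernstein-style bound rather than Hoeffding: the variance-aware tail is what produces the numerator $\delta+\delta'$ instead of the weaker $1$, and it is also what lets the reference classifier's true loss of $\delta$ cost only $\sqrt{\delta/n}$ of slack. The rest is a standard uniform convergence argument over a finite class — there are no measurability issues since $\cF_\D$ is finite and $\ell_\epsilon$ is well-defined pointwise.
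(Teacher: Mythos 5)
Your proposal is correct and follows essentially the same route as the paper: robust ERM over the finite class $\cF_\D$, a variance-aware (Bernstein/multiplicative-Chernoff) concentration bound of order $\sqrt{p_f\log|\cF_\D|/n}$ with a union bound, and the observation that this forces the ERM output's population robust loss below $\delta+\delta'$. Your explicit additive $\log|\cF_\D|/n$ term in the Bernstein bound is if anything slightly more careful than the paper's statement, but the argument is the same.
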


\begin{proof}
Let $\hat{D}_i = \frac{1}{n} \sum_{j=1}^n \delta_{\underline{X}_i(j)}$ be the empirical measure corresponding to the dataset $\underline{X}_i$. We will show that ERM on the $\epsilon$-robust loss gives the claimed sample complexity. More precisely we consider the classification algorithm that outputs:
\[
\hat{f} = \argmin_{f \in \FF_{\D}} \max_{i \in \{0,1\}} \E_{X \sim \hat{D}_i} \ell_{\epsilon}(f, X, i) \,.
\]
For shorthand notation we write $p_f = \max_{i \in \{0,1\}} \E_{X \sim D_i} \ell_{\epsilon}(f, X, i)$ and $\hat{p}_f = \max_{i \in \{0,1\}} \E_{X \sim \hat{D}_i} \ell_{\epsilon}(f, X, i)$. In particular we simply want to prove that $p_{\hat{f}} \leq \delta + \delta'$. Note that by definition $p_{f_D} \leq \delta$. A standard Chernoff bound gives that, with probability at least $2/3$, one has for {\em every} $f \in \cF_{\cD}$,
\[
|p_f - \hat{p}_f| = O(\sqrt{p_f \log(|\cF_{\D}|) / n}) \,.
\]
Now observe that for $n \geq 4 \frac{\delta + \delta'}{\delta'^2} \log(|\cF_{\D}|)$ one can has $\sqrt{p_{f_D} \log(|\cF_{\D}|) / n} \leq \delta' /2$
, and thus we obtain with $n=\Omega\left(\frac{\delta + \delta'}{\delta'^2} \log(|\cF_{\D}|)\right)$,
\[
p_{\hat{f}} - \frac{\delta'}{2} \sqrt{\frac{p_{\hat{f}}}{\delta+\delta'}} \leq \hat{p}_{\hat{f}} \leq \hat{p}_{f_D} \leq \delta + \delta' \,.
\]
It now suffices to observe that $s \geq \delta + \delta'$ implies $s - \frac{\delta'}{2} \sqrt{\frac{s}{\delta+\delta'}} > \delta + \frac{\delta'}{2}$.
\end{proof}

\subsection{Robust covering number}
In many natural situations the classification task is specified by a continuous set of distributions. For example one might have a set of the form $\cD = \{(g_0(w_0), g_1(w_1)), (w_0, w_1) \in \Omega\}$ where $g_0$ and $g_1$ are Lipschitz functions and $\Omega$ is some compact subset of $\R^{d'}$. In this case Theorem \ref{thm:finiteUB} does not apply, although one would like to say that ``essentially" $\cD$ is of log-size roughly $d'$. The classical solution to this difficulty is with covering numbers:
\begin{definition}
For a metric space $(\cX, \mathrm{dist})$ we write \[
\cN_{\mathrm{dist}}(\cX, \epsilon) = \inf\left\{ |X| \text{ s.t. } X \subset \cX \text{ and } \cX \subset \bigcup_{x \in X} \{y : \mathrm{dist}(x,y) \leq \epsilon\} \right\} \,.
\]
With a slight abuse of notation we also extend the distance to the Cartesian product $\cX \times \cX$ by $\mathrm{dist}((x,x'), (y,y')) = \max(\mathrm{dist}(x,x'), \mathrm{dist}(y,y'))$.
\end{definition}
With the above definitions one can obtain the following result as a straightforward corollary of Theorem \ref{thm:finiteUB} and the definition of total variation distance.

\begin{theorem} \label{thm:covering1}
Assume that $\D$ is $(\epsilon,\delta)$-robustly feasible. Then $\cD$ is $(\epsilon, \delta+2\delta')$-robustly learnable with $n=\Omega\left(\frac{\delta + \delta'}{\delta'^2} \log(\cN_{\mathrm{TV}}(\D, \delta'))\right)$.
\end{theorem}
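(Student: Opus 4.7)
The plan is to reduce to Theorem~\ref{thm:finiteUB} by discretizing $\D$ via a TV-cover, and then absorbing the TV approximation error into the robustness parameter.

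First, I would let $\D' \subset \D$ be a $\delta'$-TV cover of $\D$ with $|\D'| = \cN_{\mathrm{TV}}(\D, \delta')$, so that every $D \in \D$ admits some $D' \in \D'$ with $\mathrm{TV}(D_0, D'_0) \leq \delta'$ and $\mathrm{TV}(D_1, D'_1) \leq \delta'$. Since $\D' \subset \D$ and $\D$ is $(\epsilon,\delta)$-robustly feasible, each $D' \in \D'$ admits a robust classifier $f_{D'}$, and the associated finite hypothesis family $\F_{\D'}$ has size at most $\cN_{\mathrm{TV}}(\D, \delta')$.

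Next, I would run the robust ERM procedure of Theorem~\ref{thm:finiteUB} over $\F_{\D'}$, using samples from the \emph{true} distribution $D \in \D$. The key observation is that since $\ell_{\epsilon}(f, \cdot, i)$ takes values in $\{0,1\}$, for any $f$ and any $i \in \{0,1\}$
\[
\bigl| \E_{X \sim D_i}[\ell_{\epsilon}(f,X,i)] - \E_{X \sim D'_i}[\ell_{\epsilon}(f,X,i)] \bigr| \leq \mathrm{TV}(D_i, D'_i) \leq \delta' \,,
\]
by the standard variational definition of total variation distance. In particular, the cover element $D'$ of $D$ has $p_{f_{D'}}(D) \leq p_{f_{D'}}(D') + \delta' \leq \delta + \delta'$, so the ``reference'' classifier used inside the ERM analysis is $(\epsilon, \delta+\delta')$-robust for $D$ itself.

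Finally, I would repeat the Chernoff/ERM argument in the proof of Theorem~\ref{thm:finiteUB} verbatim but with the role of $\delta$ replaced by $\delta + \delta'$ throughout: the empirical risk of $\hat{f}$ is at most that of $f_{D'}$, both empirical-to-population concentrations hold uniformly over $\F_{\D'}$ with probability $2/3$ once $n = \Omega\bigl(\frac{\delta+\delta'}{\delta'^2}\log \cN_{\mathrm{TV}}(\D,\delta')\bigr)$, and the same algebraic manipulation on $s - \tfrac{\delta'}{2}\sqrt{s/(\delta+\delta')}$ yields $p_{\hat{f}}(D) \leq (\delta+\delta') + \delta' = \delta + 2\delta'$. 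There is no genuine obstacle here: the only nontrivial step is the TV-to-robust-loss bookkeeping, which is immediate because $\ell_\epsilon$ is a $\{0,1\}$-valued function and TV dominates the supremum of expectation differences over such functions. The rest is a cosmetic substitution $\delta \mapsto \delta + \delta'$ in the finite-family ERM bound.
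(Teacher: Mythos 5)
Your proposal is correct and matches the paper's own (very terse) argument: take a $\delta'$-TV cover $A$ of $\D$, note that $\D$ is $(\epsilon,\delta+\delta')$-robustly feasible using only the classifiers $\cF_A$ (because TV distance bounds the change in expectation of the $\{0,1\}$-valued loss $\ell_\epsilon$), and then invoke Theorem~\ref{thm:finiteUB} with $\delta$ replaced by $\delta+\delta'$. Your write-up just makes the TV bookkeeping explicit; no further comment needed.
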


In fact, if one is willing to lose a little bit of robustness, one can use a significantly weaker notion of ``distance" than total variation. Indeed we can consider a broader class of modifications to a distribution that preserves the robustness of a classifier: in Theorem \ref{thm:covering1} we used that we can move arbitrarily a small amount of mass, but in fact we can also move {\em a little} an arbitrary amount of mass. While the former type of movement corresponds to total variation distance, the latter corresponds to the (infinity) Wasserstein distance. We denote $W_{\infty}(D,D')$ for the infimum of $\sup_{(x,x') \in \mathrm{supp}(\mu)} \|x-x'\|$ over all measures $\mu(x,x')$ with marginal over $x$ (respectively $x'$) equal to $D$ (respectively $D'$). Next we introduce a slightly non-standard notion of covering with respect to a pair of distances

\begin{definition} \label{def:subtlecover}
For a metric space $\cX$ equipped with two distances $\mathrm{dist}$ and $\mathrm{dist}'$ we define an $(\epsilon, \delta)$  neighborhood by\footnote{The choice of first moving with $\mathrm{dist}'$ and then with $\mathrm{dist}$ will fit our application. In general a more natural definition would be:
\[
U_{\epsilon, \delta}(x) = \{y : \exists x=z_1,z_1', \hdots, z_n, z_n'=y \text{ s.t. } \sum_{i=1}^n \mathrm{dist}(z_i,z_i') \leq \epsilon \text{ and } \sum_{i=1}^{n-1} \mathrm{dist}'(z_i',z_{i+1}) \leq \delta\} \,.
\]}:
\[
U_{\epsilon, \delta}(x) = \left\{y : \exists z \text{ s.t. } \mathrm{dist}'(x,z) \leq \delta \text{ and } \mathrm{dist}(z, y) \leq \epsilon \right\} \,.
\]
The corresponding covering number is:
 \[
\cN_{\mathrm{dist}, \mathrm{dist}'}(\cX, \epsilon, \delta) = \inf\left\{ |X| \text{ s.t. } X \subset \cX \text{ and } \cX \subset \bigcup_{x \in X} U_{\epsilon, \delta}(x) \right\} \,.
\]
\end{definition}

It is now easy to prove the following strengthening of Theorem \ref{thm:covering1}:

\begin{theorem} \label{thm:covering2}
Assume that $\D$ is $(\epsilon,\delta)$-robustly feasible. Then $\cD$ is $(\epsilon - \epsilon', \delta+2\delta')$-robustly learnable with $n=\Omega\left(\frac{\delta + \delta'}{\delta'^2} \log(\cN_{W_{\infty}, \mathrm{TV}}(\D, \epsilon', \delta'))\right)$.
\end{theorem}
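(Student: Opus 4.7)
The plan is to mimic the strategy of Theorem~\ref{thm:covering1} but with a two-stage perturbation of the target distribution: one stage that costs accuracy (the TV step) and one stage that costs robustness radius (the $W_\infty$ step).

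Concretely, I would start by fixing an optimal $(W_\infty,\mathrm{TV})$-cover $A\subset\cD$ of size $\cN_{W_\infty,\mathrm{TV}}(\cD,\epsilon',\delta')$. By Definition~\ref{def:subtlecover}, every $D\in\cD$ admits some $D^*\in A$ and an intermediate distribution $\tilde{D}$ with $\mathrm{TV}(D^*,\tilde{D})\le\delta'$ and $W_\infty(\tilde{D},D)\le\epsilon'$. Since $\cD$ is $(\epsilon,\delta)$-robustly feasible, the classifier $f_{D^*}$ exists; let $\cF_A=\{f_{D^*}:D^*\in A\}$. The key claim I would prove is that $f_{D^*}$ is $(\epsilon-\epsilon',\delta+\delta')$-robust for $D$ itself. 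Then Theorem~\ref{thm:finiteUB} applied with this weakened feasibility immediately yields the stated result (up to the constant absorbed in $\Omega(\cdot)$).

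To establish the transfer claim I would move from $D^*$ to $\tilde D$ to $D$ in two steps. First, because $\mathrm{TV}(D^*,\tilde{D})\le\delta'$ and $\ell_\epsilon\in\{0,1\}$, we have $\bigl|\E_{X\sim D^*}\ell_\epsilon(f_{D^*},X,i)-\E_{X\sim\tilde D}\ell_\epsilon(f_{D^*},X,i)\bigr|\le\delta'$ for each $i\in\{0,1\}$, so $f_{D^*}$ is $(\epsilon,\delta+\delta')$-robust for $\tilde{D}$. Second, take an optimal $W_\infty$ coupling $\mu$ of $\tilde{D}$ and $D$, so pairs $(x,x')\sim\mu$ satisfy $\|x-x'\|\le\epsilon'$. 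The pointwise inequality $\ell_{\epsilon-\epsilon'}(f_{D^*},x',i)\le\ell_\epsilon(f_{D^*},x,i)$ holds because any $z$ with $\|z\|\le\epsilon-\epsilon'$ and $f_{D^*}(x'+z)\ne i$ yields $z'=(x'-x)+z$ with $\|z'\|\le\epsilon$ and $f_{D^*}(x+z')\ne i$. Taking expectations under $\mu$ preserves this inequality between marginals, so $f_{D^*}$ is $(\epsilon-\epsilon',\delta+\delta')$-robust for $D$, as desired.

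With the transfer claim in hand, the family $\cF_A$ of size $\cN_{W_\infty,\mathrm{TV}}(\cD,\epsilon',\delta')$ realizes $(\epsilon-\epsilon',\delta+\delta')$-robust feasibility of $\cD$, so Theorem~\ref{thm:finiteUB} (with $\delta\leftarrow\delta+\delta'$, $\delta'\leftarrow\delta'$, $\epsilon\leftarrow\epsilon-\epsilon'$) yields $(\epsilon-\epsilon',\delta+2\delta')$-robust learnability at the claimed sample complexity.

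The only place I expect to need a little care is the $W_\infty$ step. One has to check that $W_\infty$-couplings really do preserve the robust-loss inequality pointwise (which hinges on the fact that the supremum in the definition of $W_\infty$ holds $\mu$-a.s.\ rather than only in expectation); and one has to pick the order of the two perturbations correctly, which is exactly why Definition~\ref{def:subtlecover} applies the TV step first and the $W_\infty$ step second.
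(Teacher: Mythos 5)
Your proposal is correct and follows exactly the paper's own argument: take the set $A$ realizing the covering number, observe that $\cF_A$ witnesses $(\epsilon-\epsilon',\delta+\delta')$-robust feasibility of $\cD$, and invoke Theorem~\ref{thm:finiteUB}. The paper states this in two lines and leaves the transfer claim as an observation; your two-step verification (TV costs $\delta'$ in accuracy, $W_\infty$ costs $\epsilon'$ in radius via the pointwise inequality $\ell_{\epsilon-\epsilon'}(f,x',i)\le\ell_{\epsilon}(f,x,i)$ under the coupling) is precisely the intended justification.
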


\begin{proof}
Let $A$ be the set realizing the infimum in the definition of $\cN_{W_{\infty}, \mathrm{TV}}(\D, \epsilon', \delta')$. Observe that $\cD$ is $(\epsilon-\epsilon', \delta + \delta')$-robustly feasible with classifiers from $\cF_A$, and apply Theorem \ref{thm:finiteUB}.
\end{proof}

\subsection{Covering number bound from generative models}
We now show that distributions approximated by generative models have bounded covering numbers (in terms of Definition \ref{def:subtlecover}), so Theorem~\ref{thm:covering2} gives a good sample complexity for such distributions.  The proof is deferred to Appendix~\ref{app:gencover} in the supplementary material.
\begin{definition}
  A \emph{generative model} $g_w: \R^k \to \R^d$ is a neural network
  indexed by weights $w \in \R^m$.  The \emph{generated distribution}
  $D(g_w)$ is the distribution given by $g_w(x)$ for
  $x \sim N(0, I_k)$.
\end{definition}
\define{lem:gencover}{Lemma}{%
  Let $g_w$ be an $\ell$-layer neural network architecture with at most
  $d$ activations in each layer and Lipschitz nonlinearities such as
  ReLUs.  Consider any family of distribution pairs $\D$ such that for each
  $D \in \D$, and each $i \in \{0, 1\}$, there exists some $w \in [-B, B]^m$ with
  $W_{\infty}(D_i, D(g_w)) \leq \eps$.  Then
  \[
    \log\left(\cN_{W_{\infty}, \mathrm{TV}}(\D, \epsilon + \delta, \delta)\right) \leq O(m \ell \log (dB/\delta)).
  \]
}
\state{lem:gencover}

\section{Lower bound for the SQ model}\label{lower_bound_section}


Let $D_0$ and $D_1$ be two distributions over a set $\mathcal{X}$, for which we would like to solve a (binary)
classification task. The SQ model, introduced in~\cite{Kearns98}, is defined as follows. An algorithm
is allowed to access $D_0$ and $D_1$ through \emph{queries} of the following kind.
A query is specified by a function $h \colon \mathcal{X} \to [0, 1]$, and the response is two
numbers $u, v \in \Rbb$ such that $u \in \mathbb{E}_{x \sim D_0}[h(x)] \pm \tau$
and $v \in \mathbb{E}_{x \sim D_1}[h(x)] \pm \tau$. Here $\tau > 0$ is a positive parameter
called \emph{precision}. After asking a number of such queries, the algorithm must output a required
(robust or non-robust) classifier
for $D_0$ and $D_1$.


Our main result is as follows:

\begin{theorem}
For every sufficiently small $\rho, \gamma > 0$ the following holds.
There exists a family of $2^{d^{O(1)}}$ pairs of distributions $(\widetilde{D}_0, \widetilde{D}_1)$ over $\Rbb^d$ such that:
\begin{itemize}
\item Almost all the mass of $\widetilde{D}_0$ and $\widetilde{D}_1$ is supported in an $\ell_2$-ball of radius $O(\sqrt{d})$;
\item The distributions $\widetilde{D}_0$ and $\widetilde{D}_1$ admits a $(\Omega(\sqrt{1 / \gamma}), 2^{-d^{\Omega(\gamma)}})$-robust classifier; moreover,
a $\Omega(\sqrt{1 / \gamma}), 0.01)$-robust classifier can be learned from $O(d)$ samples from $D_0$ and $D_1$;
\item For $\widetilde{D_0}$ and $\widetilde{D_1}$, there exists a linear (non-robust) classifier, which can be learned in polynomial time;
\item For every $\eps > \rho$, in order to learn a $(\eps, 0.01)$-robust classifier for $\widetilde{D}_0$ and $\widetilde{D}_1$,
one needs at least $2^{d^{\Omega(1)}}$ statistical queries with accuracy as good as $2^{-d^{\Omega(\gamma)}}$.
\end{itemize}
\end{theorem}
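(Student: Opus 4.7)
The plan is to lift the one‑dimensional Diakonikolas--Kane--Stewart construction to a randomly chosen $k$-dimensional subspace. Fix two distributions $A_0,A_1$ on $\R$ that (i) agree with the standard Gaussian on the first $m$ moments, (ii) are supported on well‑separated intervals, say $A_i$ concentrated near $\pm \sqrt{k}/2$. For any $k$-dimensional subspace $V\subset\R^d$ with an arbitrary orthonormal basis $(v_1,\dots,v_k)$, define
\[
\widetilde D_i^V(x) \;=\; \Bigl(\prod_{j=1}^{k} A_i(v_j\!\cdot\! x)\Bigr)\;\cdot\; G_{V^\perp}(\pi_{V^\perp}(x)),
\]
where $G_{V^\perp}$ is the standard Gaussian on $V^\perp$. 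The hard family is $\{(\widetilde D_0^V,\widetilde D_1^V):V\in\mathcal V\}$, where $\mathcal V$ is a maximal $\tfrac12$-packing of the Grassmannian $\mathrm{Gr}(k,d)$ in principal‑angle distance, which has size $2^{d^{\Omega(1)}}$ as soon as $k=d^{\Theta(\gamma)}$. I would set $m=d^{\Theta(\gamma)}$ (controlling query precision) and $k=\Theta(1/\gamma)$ or a larger $d^{\Theta(\gamma)}$ depending on whether one wants robustness $\Omega(\sqrt{1/\gamma})$ or more.

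\textbf{Positive properties.} Because the marginals of $\widetilde D_i^V$ on $V^\perp$ are Gaussian and the marginals on $V$ are sub‑Gaussian, standard concentration places almost all the mass inside $B_2(O(\sqrt d))$. A simple linear classifier works non‑robustly: the means $\mu_i = \E_{\widetilde D_i^V}[x]$ lie in $V$ with $\|\mu_1-\mu_0\|=\Omega(\sqrt k)$, so $\langle \mu_1-\mu_0,x\rangle$ correctly classifies a constant fraction; and logistic regression or any mean‑direction estimator recovers this direction from $\mathrm{poly}(d)$ samples. For robustness, project onto $V$ and classify by the sign (coordinate‑wise vote) of $\pi_V(x)$: since $A_0$ and $A_1$ are separated by $\Omega(1)$ in each coordinate, this classifier tolerates $\ell_2$ perturbations of size $\Omega(\sqrt k)=\Omega(\sqrt{1/\gamma})$ with error $e^{-\Omega(k)}$. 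Sample‑efficient robust learning then follows from Theorem~\ref{thm:covering2}: the map $V\mapsto\widetilde D_i^V$ factors through an $\ell$-layer neural‑network‑style generative model with $O(kd)$ parameters (choose $V$, then add Gaussian noise), so Lemma~\ref{lem:gencover} gives $\log \cN_{W_\infty,\mathrm{TV}} = \widetilde O(kd) = \widetilde O(d)$, hence $O(d)$ samples suffice.

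\textbf{SQ lower bound.} I would apply the Blum--Feldman--Grigorescu--Reyzin--Vempala statistical‑dimension framework relative to the reference distribution $G=N(0,I_d)$. The key quantity is the pairwise correlation
\[
\chi_G\!\left(\widetilde D_i^{V},\widetilde D_j^{V'}\right) \;=\; \int \frac{(\widetilde D_i^{V}-G)(\widetilde D_j^{V'}-G)}{G}\,dx.
\]
Expand $\widetilde D_i^{V}/G-1$ in the multivariate Hermite basis. By moment matching, all Hermite coefficients of total degree $\le m$ vanish, so only terms of degree $>m$ remain. The cross inner products between Hermite polynomials built from bases of $V$ and $V'$ equal products of entries of the $k\times k$ matrix $M=U^\top U'$ of basis overlaps, each entry bounded, with high probability over a random pair of subspaces, by $O(\sqrt{\log(1/\eta)/d})$. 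Summing the Hermite series over degrees $\ge m$, and paying a combinatorial factor of $k^{O(m)}$ for the number of multi‑indices, yields
\[
\big|\chi_G(\widetilde D_i^{V},\widetilde D_j^{V'})\big| \;\le\; \bigl(Ck^2/d\bigr)^{m/2} \;\le\; 2^{-d^{\Omega(\gamma)}},
\]
with the chosen $k,m$. Plugging this into the standard SQ dimension bound gives that any algorithm requires $2^{d^{\Omega(1)}}$ queries at precision $2^{-d^{\Omega(\gamma)}}$ to identify the planted subspace, and hence to produce an $(\epsilon,0.01)$‑robust classifier whenever $\epsilon>\rho$ (any classifier with this robustness reveals $V$ up to accuracy $\rho$, contradicting the packing size).

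\textbf{Main obstacle.} The heart of the argument is the correlation bound above for $k\gg1$; this is exactly where the analysis departs from~\cite{DKS17}, whose $k=1$ calculation uses an explicit one‑variable formula $\langle H_j(u\!\cdot\!x),H_j(u'\!\cdot\!x)\rangle_G=(u\!\cdot\!u')^j$. For $k>1$ the relevant quantity is a sum over multi‑indices $\alpha\in\N^k$ with $|\alpha|>m$, and the contributions involve permanents/products of entries of $M=U^\top U'$. Controlling this sum requires both the concentration estimate on $\|M\|_\infty$ for random $V,V'$ and a careful accounting of the Hermite coefficients of $A_i$ to prevent the combinatorial blow‑up from $k^{O(m)}$ terms from swamping the decay $(k/d)^{m/2}$. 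Getting this tradeoff right, which in turn determines the allowed ranges of $\rho$ and $\gamma$, is the only nontrivial step; all other pieces (concentration, the linear classifier, the generative‑model covering bound, and the BFGRV reduction) are plug‑ins from established toolkits.
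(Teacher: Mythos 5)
Your high-level architecture (moment-matching one-dimensional distributions planted in one of exponentially many near-orthogonal $k$-dimensional subspaces, analyzed via the statistical-dimension framework of~\cite{FGRVX}) is the same as the paper's, but your construction is internally inconsistent in a way that kills one of the four bullets no matter how it is resolved. You ask for $A_0,A_1$ to match $N(0,1)$ in the first $m$ moments \emph{and} to be concentrated near $\pm\sqrt{k}/2$, so that the means of $\widetilde D_0^V,\widetilde D_1^V$ are $\Omega(\sqrt k)$ apart and a linear classifier is efficiently learnable. Matching even the first moment forces both means to be $0$, so the mean separation is impossible; and if you drop moment matching to obtain it, the SQ lower bound collapses ($d$ queries of the form ``clipped $x_j$,'' at the permitted precision $2^{-d^{\Omega(\gamma)}}$, recover the mean direction and hence a good linear classifier). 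The paper resolves this tension with a separate device you are missing: append one extra coordinate equal to $0$ under $\widetilde D_0$ and to $\rho$ under $\widetilde D_1$ (Section~\ref{add_bogus_coordinate}). This gives a trivial, efficiently learnable linear separator that is non-robust precisely because the offset is only $\rho$ --- which is exactly why the hardness clause reads ``for every $\eps>\rho$.''

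Two further problems. First, the claim that $A_0$ and $A_1$ are ``separated by $\Omega(1)$ in each coordinate,'' yielding robustness $\Omega(\sqrt k)$ with $k=\Theta(1/\gamma)$, is too strong: distributions matching $m$ moments of $N(0,1)$ (the paper uses Gauss--Hermite quadrature on the interlacing roots of $H_m$ and $H_{m+1}$, smoothed by a small Gaussian) have supports separated only by $\Theta(1/\sqrt m)$, so the robustness radius is $\Omega(\sqrt{k/m})$ and the parameters must be coupled (the paper takes $m=d^{\Theta(\gamma)}$ and $k$ of order $m/\gamma$). Likewise the robust classifier cannot be a ``sign vote'' on $\proj_V(x)$; it must test membership in the separated supports. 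Second, the correlation bound $\chi_{N(0,I_d)}(\widetilde D^{V},\widetilde D^{V'})\le 2^{-d^{\Omega(\gamma)}}$, which you correctly identify as the crux, is left as a sketch; controlling the $k^{O(m)}$ multi-indices against the $(k^2/d)^{m/2}$ decay is precisely where the paper departs from the $k=1$ computation of~\cite{DKS17}, and it does so not via a Hermite expansion but by Taylor-expanding $a(\langle x,v_i\rangle)$ around $\langle x,\widetilde v_i\rangle$ (the component of $v_i$ orthogonal to $U_1$), using the derivative bounds $|\tfrac{d^l}{dt^l}\tfrac{A(t)}{G(t)}|\le m^{O(l+1)}$ from Lemma~\ref{one_dimensional_hard} and the vanishing of all low-degree terms (Lemma~\ref{coeff_killing}). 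Without that step carried out, the theorem's main claim remains unproven.
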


For instance, if $\gamma$ is a small constant we get the existence of a $C$-robust classifier,
where $C$ is a large constant. One could push $C$ as high as $\Omega(\log^{1/2 - \eps} d)$
at a cost of the lower bound being against SQ queries with somewhat worse accuracy ($2^{-2^{\log^{\Omega(\eps)} d}}$ instead
of $2^{-d^{\Omega(1)}}$).

We first show a family of pairs $(D_0, D_1)$ that admit a robust classifier, yet it is hard (in the SQ model) to learn \emph{any} (non-robust) classifier.
Later, in Section~\ref{add_bogus_coordinate}, we show a simple modification of this family to obtain the main result.

\subsection{Hard family of distributions}

Here we define a hard family of pairs of distributions $(D_0, D_1)$ as discussed above.  This section contains the definition and key properties of the family; proofs of those properties appear in Appendix~\ref{app:distribution}.  This family can be seen to be a high-dimensional generalization and modification of a family considered in~\cite{DKS17}. The family depends on three parameters: integers $1 \leq k \leq d$, $m \geq 1$ and a positive real $\eps > 0$.

Fix an integer $m \geq 1$. We introduce two auxiliary distributions over $\Rbb$ that we will use later
as building blocks.

\define{one_dimensional_hard}{Lemma}{%
There exist two distributions $D_A$ and $D_B$ over $\Rbb$ with everywhere positive p.d.f.'s $A(t)$ and $B(t)$ respectively such that:
\begin{itemize}
\item $D_A$ and $D_B$ match $N(0, 1)$ in the first $m$ moments;
\item There exist two subsets $S_A, S_B \subset \Rbb$ such that the distance between $S_A$ and $S_B$ is at least $\Omega(1 / \sqrt{m})$,
$\mathbb{P}_{x \sim D_A}[x \in S_A] \geq 1 - e^{-\Omega(m)}$, and $\mathbb{P}_{x \sim D_B}[x \in S_B] \geq 1 - e^{-\Omega(m)}$;
\item $A, B \in C^{\infty}$, and for every $0 \leq l \leq m + 1$ and $t$, one has: $|\frac{d^l}{dt^l} \frac{A(t)}{G(t)}|, |\frac{d^l}{dt^l} \frac{B(t)}{G(t)}| \leq m^{O(l + 1)}$.
\end{itemize}

(See Figure~\ref{fig:hermite} for the illustration.)

}
\state{one_dimensional_hard}

\begin{figure}
  \centering
  \includegraphics[width=1.0\textwidth]{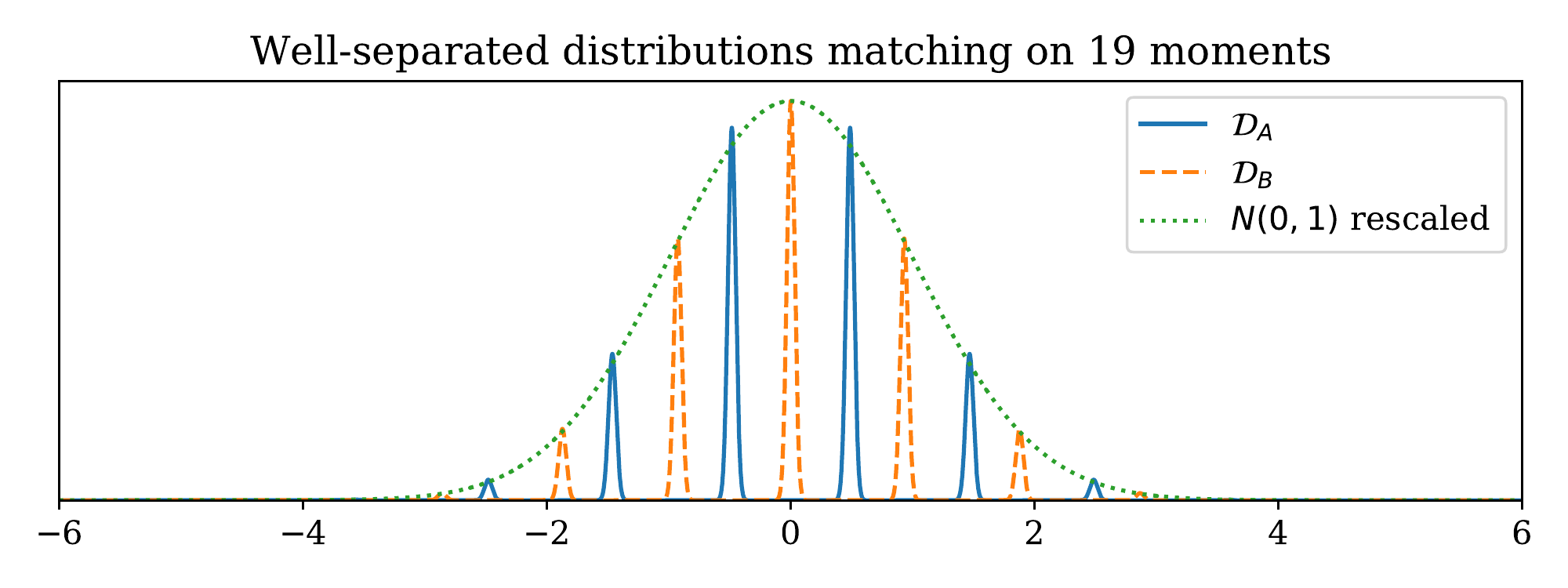}
  \caption{The distributions in Lemma~\ref{one_dimensional_hard} are
    similar to discretized Gaussians, with careful discretization and
    weighting from Gauss-Hermite quadrature.}
  \label{fig:hermite}
\end{figure}

Next let us fix parameters $1 \leq k \leq d$ and $\eps > 0$.
Let $\Uc = \{U_i\}$ be a family of $k$-dimensional subspaces of $\Rbb^d$
with fixed orthonormal bases such that for every $i \ne j$ and $u \in U_i$, one has: $\|\proj_{U_j} u\|_2 \leq \eps \cdot \|u\|_2$.
Informally speaking, subspaces from $\Uc$ are pairwise near-orthogonal.

\define{many_subspaces}{Lemma}{%
For every $k \leq d^{\Omega(1)}$,
there exists such a family $\Uc$ with $\eps \leq d^{-0.49}$ and $|\Uc| = 2^{d^{\Theta(1)}}$.
}
\state{many_subspaces}

Now we are ready to define our family of hard pairs $(D_0, D_1)$ of distributions over $\Rbb^d$.
The family is parameterized by a $k$-dimensional subspace $U \in \Uc$ together with an orthonormal basis $u_1, u_2, \ldots, u_k \in U$, where $\Uc$ is the family of
subspaces guaranteed by Lemma~\ref{many_subspaces}. Let us extend the above basis to a basis for the whole $\Rbb^d$: $u_1, u_2, \ldots, u_d$.
Now we define a pair of distributions $D_{U, A}$ and $D_{U, B}$ via their p.d.f.'s $A_U(x)$ and $B_U(x)$
respectively as follows:
$$
A_U(x) = \prod_{i=1}^k A(\langle x, u_i\rangle) \cdot \prod_{i=k+1}^d G(\langle x, u_i \rangle)
\qquad \text{and} \qquad
B_U(x) = \prod_{i=1}^k B(\langle x, u_i\rangle) \cdot \prod_{i=k+1}^d G(\langle x, u_i \rangle),
$$
where $A(\cdot)$ and $B(\cdot)$ are densities of distributions $D_A$
and $D_B$ from Lemma~\ref{one_dimensional_hard}, and
$G(t) = \frac{1}{\sqrt{2 \pi}} \cdot e^{-t^2 / 2}$ is the p.d.f.\ of
the standard Gaussian distribution $N(0, 1)$.  Now we simply take
$D_0$ to be $D_{U, A}$ and $D_1$ to be $D_{U, B}$.

\define{high_dimensional_separation}{Lemma}{%
There exist two sets $S_{U, A}, S_{U, B} \subset \Rbb^d$ such that the distance between $S_{U, A}$ and $S_{U, B}$ is $\Omega(\sqrt{k / m})$,
and for which $\mathbb{P}_{x \sim D_{U, A}}[x \in S_{U, A}] \geq 1 - e^{- \Omega(km)}$  and
 $\mathbb{P}_{x \sim D_{U, B}}[x \in S_{U, B}] \geq 1 - e^{- \Omega(km)}$.
}
\state{high_dimensional_separation}

As a result, the pair $(D_0, D_1)$ admits a $(\Omega(\sqrt{k / m}), e^{-km^{\Omega(1)}})$-robust classifier. Moreover, since $\log |\Uc| \leq O(d)$
(which follows from standard bounds on the number of pairwise near-orthogonal unit vectors in $\Rbb^d$),
it follows from Theorem~\ref{thm:finiteUB} that one can learn a $(\Omega(\sqrt{k / m}), 0.01)$-robust classifier from merely $O(d)$ samples.

\subsection[SQ lower bound for learning a classifier]{SQ lower bound for learning a classifier for $D_{U, A}$ and $D_{U, B}$}
\label{sq_facade}

The heart of the matter is to show that it requires $2^{d^{\Omega(1)}}$ statistical queries with precision $\tau = 2^{-d^{\Theta(\gamma)}}$ to learn a classifier
for $D_{U, A}$ and $D_{U, B}$ provided that all the parameters $m, k, \eps$ are set correctly.
The argument is fairly involved and uses the framework of~\cite{FGRVX} to reduce the question to that of upper bounding $\chi$-correlation between
the distributions. Due to space limitations, we show the argument in Appendix~\ref{sq_lb_appendix} of the supplementary material.

\subsection{Making the distribution easy to learn non-robustly}
\label{add_bogus_coordinate}

Let us now show a family of pairs distributions $(\widetilde{D}_0, \widetilde{D}_1)$ over $\Rbb^{d + 1}$ such that it is easy to learn a (non-robust) classifier, but hard to learn a robust one. The construction is very simple: we take distributions $(D_0, D_1)$ over $\Rbb^d$ as defined above and define $x \sim \widetilde{D}_0$
to be $x = (0, y_1, y_2, \ldots, y_d)$, where $y \sim D_0$, and, similarly,
$x \sim \widetilde{D}_1$ to be $x = (\rho, y_1, y_2, \ldots, y_d)$, where $y \sim D_1$ and $\rho > 0$.
These distributions admit a trivial (non-robust) classifier based on the first coordinate. Moreover,
since $\widetilde{D}_0$ and $\widetilde{D}_1$ are linearly separable, they can be classified using
linear SVM or logistic regression. Information-theoretically, one can learn a $(\sqrt{1 / \gamma}, 0.1)$-robust classifier using $O(d)$
samples by ignoring the first coordinate and applying Theorem~\ref{thm:finiteUB}.
However, for every $\eps > \rho$, one needs $2^{d^{\Omega(1)}}$ SQ queries with accuracy $2^{-d^{\Theta(\gamma)}}$
to learn an $(\eps, 0.1)$-robust separator. This can be shown exactly the same way as for $D_0$ and $D_1$ (see Appendix~\ref{sq_lb_appendix} in the supplementary material).

The above distributions are hard to learn robustly with respect to the
$\ell_2$ norm.  We can switch to $\ell_\infty$ by replacing $x$ by
its Hadamard transform $Hx$.  Since
$\norm{Hx - Hy}_{\infty} \geq \norm{H(x-y)}_2/\sqrt{d} =
\norm{x-y}_2$, the robustness parameters in the theorem are
unchanged while the diameter becomes $O(\sqrt{d \log d})$.

\section{Conclusion and future directions}
In this paper we put forward the thesis that adversarial examples might be an unavoidable consequence of computational constraints for learning algorithms. Our main piece of evidence is a classification task, for which there essentially exists a classifier robust to Euclidean perturbations of size $\log^{1/2 - \eps} d$ (while with high probability any sample has norm $O(\sqrt{d})$), yet finding {\em any} non-trivial robust classifier (even for arbitrarily small perturbations, and with probability of correctness only slightly better than chance) is hard in the statistical query model (in the sense that one needs an exponential number of queries, even with a very high precision statistical query oracle). We identify several directions in which this result could be strengthened to give stronger evidence for our thesis.
\begin{enumerate}
\item The most important question for the validity of our thesis is whether one could prove a similar hardness result for {\em natural} distributions. This is a particularly challenging open problem as the concept of a natural distribution is fuzzy (for instance there is no consensus on what a natural distribution for images should look like).
\item We believe that our proposed classification task is really computationally hard in any sense, not only in the statistical query model. As we discussed SQ is natural for learning theory hardness, but there have been lots of works leveraging other types of hardness assumption (e.g., cryptographic). It would be interesting to explore further the position of robust learning in the hardness landscape.
\item Finally one might wonder whether the perturbation size $\log^{1/2 - \eps} d$ is optimal (for distributions essentially supported in a ball of size $\sqrt{d}$). A concrete open question could be phrased as follows: consider a classification task that is $(\Psi(d), 0)$-robustly feasible, how fast does $\Psi$ need to grow in order to ensure that one can find in polynomial time a $(1, 1/3)$-robust classifier?
\end{enumerate}
\newpage
\bibliographystyle{plainnat}
\bibliography{bib}

\begin{thebibliography}{20}
\providecommand{\natexlab}[1]{#1}
\providecommand{\url}[1]{\texttt{#1}}
\expandafter\ifx\csname urlstyle\endcsname\relax
  \providecommand{\doi}[1]{doi: #1}\else
  \providecommand{\doi}{doi: \begingroup \urlstyle{rm}\Url}\fi

\bibitem[Athalye et~al.(2018)Athalye, Carlini, and Wagner]{ACW18}
Anish Athalye, Nicholas Carlini, and David Wagner.
\newblock Obfuscated gradients give a false sense of security: Circumventing
  defenses to adversarial examples.
\newblock In \emph{Proceedings of the 35th International Conference on Machine
  Learning}, ICML '18, 2018.
\newblock URL \url{https://arxiv.org/abs/1802.00420}.

\bibitem[Blum et~al.(1994)Blum, Furst, Jackson, Kearns, Mansour, and
  Rudich]{Blum94}
Avrim Blum, Merrick Furst, Jeffrey Jackson, Michael Kearns, Yishay Mansour, and
  Steven Rudich.
\newblock Weakly learning dnf and characterizing statistical query learning
  using fourier analysis.
\newblock In \emph{Proceedings of the Twenty-sixth Annual ACM Symposium on
  Theory of Computing}, STOC '94, pages 253--262. ACM, 1994.

\bibitem[Dalvi et~al.(2004)Dalvi, Domingos, Mausam, Sanghai, and
  Verma]{Dalvi04}
Nilesh Dalvi, Pedro Domingos, Mausam, Sumit Sanghai, and Deepak Verma.
\newblock Adversarial classification.
\newblock In \emph{Proceedings of the Tenth ACM SIGKDD International Conference
  on Knowledge Discovery and Data Mining}, KDD '04, pages 99--108. ACM, 2004.

\bibitem[Diakonikolas et~al.(2017)Diakonikolas, Kane, and Stewart]{DKS17}
Ilias Diakonikolas, Daniel Kane, and Alistair Stewart.
\newblock Statistical query lower bounds for robust estimation of
  high-dimensional gaussians and gaussian mixtures.
\newblock In \emph{Proceedings of the fifty-eighth Annual Symposium on
  Foundations of Computer Science}, FOCS '17, 2017.
\newblock URL \url{https://arxiv.org/pdf/1611.03473.pdf}.

\bibitem[Elsayed et~al.(2018)Elsayed, Shankar, Cheung, Papernot, Kurakin,
  Goodfellow, and Sohl-Dickstein]{elsayed2018adversarial}
Gamaleldin~F Elsayed, Shreya Shankar, Brian Cheung, Nicolas Papernot, Alex
  Kurakin, Ian Goodfellow, and Jascha Sohl-Dickstein.
\newblock Adversarial examples that fool both human and computer vision.
\newblock \emph{arXiv preprint arXiv:1802.08195}, 2018.

\bibitem[Fawzi et~al.(2018)Fawzi, Fawzi, and Fawzi]{FFF18}
Alhussein Fawzi, Hamza Fawzi, and Omar Fawzi.
\newblock Adversarial vulnerability for any classifier, 2018.
\newblock URL \url{https://arxiv.org/pdf/arXiv:1802.08686.pdf}.

\bibitem[Feldman(2017)]{feldman2017general}
Vitaly Feldman.
\newblock A general characterization of the statistical query complexity.
\newblock \emph{Proceedings of Machine Learning Research vol}, 65:\penalty0
  1--46, 2017.

\bibitem[Feldman et~al.(2013)Feldman, Grigorescu, Reyzin, Vempala, and
  Xiao]{FGRVX}
Vitaly Feldman, Elena Grigorescu, Lev Reyzin, Santosh Vempala, and Ying Xiao.
\newblock Statistical algorithms and a lower bound for detecting planted
  cliques.
\newblock In \emph{Proceedings of the forty-fifth annual ACM symposium on
  Theory of computing}, STOC '13, pages 655--664. ACM, 2013.

\bibitem[Gil et~al.(2018)Gil, Segura, and Temme]{gil2018asymptotic}
Amparo Gil, Javier Segura, and Nico~M Temme.
\newblock Asymptotic approximations to the nodes and weights of gauss--hermite
  and gauss--laguerre quadratures.
\newblock \emph{Studies in Applied Mathematics}, 140\penalty0 (3):\penalty0
  298--332, 2018.

\bibitem[Gilmer et~al.(2018)Gilmer, Metz, Faghri, Schoenholz, Raghu,
  Wattenberg, and Goodfellow]{G18}
Justin Gilmer, Luke Metz, Fartash Faghri, Sam Schoenholz, Maithra Raghu, Martin
  Wattenberg, and Ian Goodfellow.
\newblock Adversarial spheres.
\newblock In \emph{International Conference on Learning Representations
  Workshop}, 2018.
\newblock URL \url{https://arxiv.org/pdf/1801.02774.pdf}.

\bibitem[Globerson and Roweis(2006)]{GR06}
Amir Globerson and Sam Roweis.
\newblock Nightmare at test time: Robust learning by feature deletion.
\newblock In \emph{Proceedings of the 23rd International Conference on Machine
  Learning}, ICML '06, pages 353--360. ACM, 2006.

\bibitem[Kearns(1998)]{Kearns98}
Michael Kearns.
\newblock Efficient noise-tolerant learning from statistical queries.
\newblock \emph{Journal of the ACM (JACM)}, 45\penalty0 (6):\penalty0
  983--1006, 1998.

\bibitem[Klivans and Sherstov(2007)]{KS}
Adam~R. Klivans and Alexander~A. Sherstov.
\newblock Unconditional lower bounds for learning intersections of halfspaces.
\newblock \emph{Machine Learning}, 69\penalty0 (2):\penalty0 97--114, 2007.

\bibitem[Madry et~al.(2018)Madry, Makelov, Schmidt, Tsipras, and
  Vladu]{Madry18}
Aleksander Madry, Aleksandar Makelov, Ludwig Schmidt, Dimitris Tsipras, and
  Adrian Vladu.
\newblock Towards deep learning models resistant to adversarial attacks.
\newblock In \emph{International Conference on Learning Representations}, 2018.
\newblock URL \url{https://arxiv.org/pdf/1706.06083.pdf}.

\bibitem[Nesterov(2004)]{Nes04}
Y.~Nesterov.
\newblock \emph{Introductory lectures on convex optimization: A basic course}.
\newblock Kluwer Academic Publishers, 2004.

\bibitem[Schmidt et~al.(2018)Schmidt, Santurkar, Tsipras, Talwar, and
  Madry]{M18}
Ludwig Schmidt, Shibani Santurkar, Dimitris Tsipras, Kunal Talwar, and
  Aleksander Madry.
\newblock Adversarially robust generalization requires more data, 2018.
\newblock URL \url{https://arxiv.org/pdf/arXiv:1804.11285.pdf}.

\bibitem[Song et~al.(2017)Song, Vempala, Wilmes, and Xie]{song2017complexity}
Le~Song, Santosh Vempala, John Wilmes, and Bo~Xie.
\newblock On the complexity of learning neural networks.
\newblock In \emph{Advances in Neural Information Processing Systems}, pages
  5520--5528, 2017.

\bibitem[Szegedy et~al.(2013)Szegedy, Zaremba, Sutskever, Bruna, Erhan,
  Goodfellow, and Fergus]{G14}
Christian Szegedy, Wojciech Zaremba, Ilya Sutskever, Joan Bruna, Dumitru Erhan,
  Ian Goodfellow, and Rob Fergus.
\newblock Intriguing properties of neural networks.
\newblock In \emph{International Conference on Learning Representations}, 2013.
\newblock URL \url{https://arxiv.org/pdf/1312.6199.pdf}.

\bibitem[Szego(1939)]{szeg1939orthogonal}
Gabor Szego.
\newblock \emph{Orthogonal polynomials}, volume~23.
\newblock American Mathematical Soc., 1939.

\bibitem[Wang et~al.(2017)Wang, Jha, and Chaudhuri]{wang2017analyzing}
Yizhen Wang, Somesh Jha, and Kamalika Chaudhuri.
\newblock Analyzing the robustness of nearest neighbors to adversarial
  examples.
\newblock \emph{arXiv preprint arXiv:1706.03922}, 2017.

\end{thebibliography}
\newpage
\appendix
\section{Proofs of properties of the SQ hard distribution}\label{app:distribution}

We start with the following lemma on Hermite polynomials:

\begin{lemma}
\label{hermite_spacing}
For every $k > 1$, the distance between any roots of $H_{k-1}(t)$ and $H_k(t)$ is at least $\Omega(1 / \sqrt{k})$.
\end{lemma}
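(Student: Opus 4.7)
The plan is to prove this via a quantitative local analysis around each root of $H_{k-1}$, exploiting that roots of $H_{k-1}$ are precisely the critical points of $H_k$ (via the identity $H_k' = k H_{k-1}$ for probabilist's Hermite).

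First I would reduce to the \emph{adjacent} case. Passing to the Liouville normal form $u'' + q_k(x) u = 0$ of the Hermite ODE (with $u = e^{-x^2/4} H_k$ and $q_k(x) = k + 1/2 - x^2/4$), Sturm's zero-spacing bound gives that consecutive zeros of $H_k$ are separated by at least $\pi/\sqrt{k + 1/2} = \Omega(1/\sqrt{k})$. The classical strict interlacing of the zeros of $H_{k-1}$ and $H_k$ (a corollary of Sturm comparison between the two equations, whose coefficients differ by the constant $1$) then implies that any non-adjacent pair $(a,b)$ is already $\Omega(1/\sqrt{k})$-separated, so it suffices to bound $|a - b|$ when $a$ is a root of $H_{k-1}$ and $b$ is an \emph{adjacent} root of $H_k$.

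For this local analysis, let $z(t) = e^{-(a+t)^2/4} H_k(a+t)$, which satisfies $z'' + q(t) z = 0$ with $q(t) = k + 1/2 - (a+t)^2/4 \leq k + 1/2 =: Q$, and initial data $z(0) \neq 0$, $z'(0) = -(a/2)\,z(0)$ (using $H_k'(a) = k H_{k-1}(a) = 0$). Let $v$ solve $v'' + Q v = 0$ with the same initial data; explicitly
\[
v(t) = z(0)\bigl[\cos(\sqrt{Q}\, t) - \alpha \sin(\sqrt{Q}\, t)\bigr], \qquad \alpha := a/(2\sqrt{Q}).
\]
A Wronskian argument---$W = z v' - v z'$ satisfies $W(0) = 0$ and $W' = (q - Q) z v \leq 0$ on any interval where $z v \geq 0$---shows that $z$ cannot vanish before $v$ does on either side of $t = 0$. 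The nearest zeros of $v$ to $0$ lie at distances $\arctan(1/|\alpha|)/\sqrt{Q}$ and $(\pi - \arctan(1/|\alpha|))/\sqrt{Q}$; using the classical bound $|a| = O(\sqrt{k})$ on the roots of $H_{k-1}$ (hence $|\alpha| = O(1)$) together with the elementary $\arctan(x) \geq \min(x/2, \pi/4)$ for $x > 0$, both of these distances are $\Omega(\min(1/\sqrt{Q},\, 1/|a|)) = \Omega(1/\sqrt{k})$.

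The main obstacle I anticipate is not the core comparison but rather the careful handling of the Wronskian sign conditions to cover both sides of $t = 0$ (since the nearest $H_k$-root to $a$ could lie on either side), together with cleanly deriving the two classical inputs used above: the strict interlacing of zeros of $H_k$ and $H_{k-1}$, and the $O(\sqrt{k})$ bound on the extremal roots of $H_{k-1}$.
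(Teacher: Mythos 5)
Your proof is correct, and while it lives in the same circle of ideas as the paper's (Sturm-type comparison for the Liouville normal form of the Hermite equation, plus the identity expressing $H_{k-1}$ via $H_k'$ and the classical $O(\sqrt{k})$ bound on the extremal roots), it is organized rather differently. The paper anchors at a pair of consecutive zeros $u<w$ of $H_k$ with the intermediate extremum $v$ (a root of $H_{k-1}$) and bounds $v-u$ and $w-v$ by two \emph{separate} comparisons: the self-adjoint equation $Z''+(2k+1-t^2)Z=0$ against the constant-coefficient majorant for one side, and the raw Hermite ODE against a frozen-coefficient version $Z''-2wZ'+2kZ=0$ for the other. You instead center a single comparison at the root $a$ of $H_{k-1}$, exploit the matched initial data $z'(0)=-(a/2)z(0)$, and read off both one-sided distances from the explicit zeros of the constant-coefficient solution $\cos(\sqrt{Q}t)-\alpha\sin(\sqrt{Q}t)$; this is more uniform, avoids the paper's slightly delicate claim that the maximum of $e^{-t^2/2}H_k$ precedes that of $H_k$, and in fact makes your preliminary ``reduction to adjacent pairs'' redundant (the nearest zero on each side is all you ever need to bound, and all other zeros are farther). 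The price is the Wronskian sign bookkeeping on both sides of $t=0$, which you correctly flag and which does go through. One cosmetic note: you work with the probabilist's normalization ($H_k'=kH_{k-1}$, weight $e^{-x^2/4}$) while the paper uses the physicist's ($H_k'=2kH_{k-1}$, weight $e^{-t^2/2}$); since the roots differ by a factor of $\sqrt{2}$, the $\Omega(1/\sqrt{k})$ conclusion is unaffected, but you should fix one convention explicitly when writing this up.
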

\begin{proof}
  It is known that extrema of $H_k$ are exactly zeros of
  $H_{k-1}$, which follows from $H_k' = 2k H_{k-1}$ and a lack of double roots. Thus, it is enough to show
  that extrema and zeros of $H_k$ are
  $\Omega(1 / \sqrt{k})$-separated.

Consider the case where $0 \leq u < v < w$ are such that $H_k(u) = H_k(w) = 0$, $H_k$ is positive between $u$ and $w$, and $H_k'(v) = 0$.
Let us show how to lower bound $v - u$.
Denote $F_k(t) = e^{-t^2/ 2} H_k(t)$. Clearly, $F_k(u) = F_k(w) = 0$ and $F_k$ is positive between $u$ and $w$ with a unique local maximum on $[u, w]$, which we denote by $v'$.
It is not hard to check that $v' \leq v$. Thus, it is enough to lower bound $v' - u$.
It is known (see, e.g.,~\cite[Section 5.5]{szeg1939orthogonal} that $F_k$ satisfies the ODE $Z'' + (2k + 1 - t^2) Z = 0$.
By comparing with $Z'' + (2k + 1) Z = 0$, we can get that lower bound $v - u \geq v' - u \geq \frac{\pi}{2 \sqrt{2k + 1}} = \Omega(1 / \sqrt{k})$.

Now let us lower bound $w - v$. It is known~\cite[Section 5.5]{szeg1939orthogonal} that $H_k$ satisfies the ODE $Z'' - 2 t Z' + 2k Z = 0$. By comparing this ODE with $Z'' - 2 w Z' + 2 k Z = 0$,
we get that $w - v \geq \frac{\arctan\left(\frac{\sqrt{2k - w^2}}{w}\right)}{\sqrt{2k - w^2}} \geq \Omega(1 / \sqrt{k})$. The latter step is due to $w \leq \sqrt{2 k}$ and that
the lower bound on $w - v$ is nonincreasing in $w$.

Other cases can be treated similarly.
\end{proof}

\restate{one_dimensional_hard}
\begin{proof}
Let $H_m(t)$ and $H_{m+1}(t)$ be two consecutive (physicist's) Hermite's polynomials.
It is a classic result in Gaussian quadrature (see, e.g.,~\cite{szeg1939orthogonal}) that for every $k$, there exists a discrete distribution supported on the zeros of $H_k(t / \sqrt{2})$,
which matches $N(0, 1)$ in the first $2k - 1$ moments. Let $\widetilde{D}_A$ denote such a distribution for $H_m$
and $\widetilde{D}_B$ the same for $H_{m+1}$. By Lemma~\ref{hermite_spacing}, the distance between the supports of
$\widetilde{D}_A$ and $\widetilde{D}_B$ is at least $\Omega(1 / \sqrt{m})$ and they both match $N(0, 1)$
in the first $2m - 1 \geq m$ moments.

Now, we obtain the desired distributions $D_A$ and $D_B$ as follows. Fix a small $\delta > 0$. The distribution $D_A$ is
defined as $\sqrt{1 - \delta} \cdot x + \sqrt{\delta} \cdot y$, where $x \sim \widetilde{D}_A$, $y \sim N(0, 1)$, and $x$ and $y$ are independent.
The distribution $D_B$ is defined similarly, but instead of $\widetilde{D}_A$ we use $\widetilde{D}_B$.
It is easy to check that $D_A$ and $D_B$ match the first $m$ moments of $N(0, 1)$.
Now suppose that $\delta = 1 / m^2$. The second property follows from the supports of $\widetilde{D}_A$
and $\widetilde{D}_B$ being $\Omega(1 / \sqrt{m})$ separated and the standard concentration inequalities; specifically,
we take $S_A$ to be the Minkowski sum of the support of scaled down $\widetilde{D}_A$ and the ball of radius $\Theta(1 / \sqrt{m})$,
and $S_B$ to be similar with $\widetilde{D}_B$ instead of $\widetilde{D}_A$. Then the chance $x \sim D_A$ is not in $S_A$ is at most the chance $y \sim N(0,1)$ has $|\sqrt{\delta} y| > \Omega(1/\sqrt{m})$, which is $e^{-\Omega(m)}$.

Now let us prove the bounds on $\frac{d^l}{dt^l} \frac{A(t)}{G(t)}$, for the $B(\cdot)$ similar bounds follows exactly the same way.

Denote $x_1 < x_2 < \ldots < x_m$ the roots of $H_m(t)$.

One has:
$$
A(t) = \frac{1}{\sqrt{2 \pi \delta}} \sum_{i=1}^{m} p_i e^{-\frac{\left(t - \sqrt{2 (1 - \delta)} x_i\right)^2}{2 \delta}},
$$
where $p_i = \mathbb{P}_{x \sim \widetilde{D}_A}[x = \sqrt{2} x_i]$.
Hence,
\begin{align*}
\frac{A(t)}{G(t)} &= \frac{1}{\sqrt{\delta}} \sum_{i=1}^{m} p_i e^{-\frac{\left(t - \sqrt{2 (1 - \delta)} x_i\right)^2}{2 \delta} + \frac{t^2}{2}}
\\& = \frac{1}{\sqrt{\delta}} \sum_{i=1}^m p_i e^{- \frac{1}{2} \cdot \left(\left(t \cdot \sqrt{\frac{1}{\delta} - 1} - \sqrt{\frac{2}{\delta}} \cdot x_i\right)^2 - 2x_i^2\right)}.
\\& = \frac{1}{\sqrt{\delta}} \sum_{i=1}^m p_i e^{- \frac{1}{2} \cdot \frac{\left(t \cdot \sqrt{1 - \delta} - \sqrt{2}x_i\right)^2}{\delta} + x_i^2}.
\end{align*}

We have for every $i$ the bound $p_i e^{x_i^2} = O(1)$~\cite{gil2018asymptotic}. Therefore, if $Q(t)$ denotes the p.d.f.\ of $N(0, \delta/(1-\delta))$ we have
\[
  \sup_t \left| \frac{d^l}{dt^l} \frac{A(t)}{G(t)}\right| \leq \frac{1}{\sqrt{\delta}} \cdot m \cdot O(1) \cdot \left(\sup_t \left|\frac{d^l}{dt^l} Q(t)\right|\right) = m(l/\delta)^{O(l+1)} = m^{O(l + 1)}.
\]
\end{proof}

\restate{many_subspaces}
\begin{proof}
Let $U$ and $V$ be uniformly random $k$-dimensional subspaces of $\Rbb^d$.
W.l.o.g. we can assume that $U$ is spanned by the first $k$ standard basis vectors.
Let $V$ be spanned by an orthonormal basis $v_1, v_2, \ldots, v_k$ such that each $v_i$
is distributed uniformly on the unit sphere of $\Rbb^d$.
Consider an $\eps'$-net $\mathcal{N}$ of the unit sphere of $U$ of size $(1 / \eps')^{O(k)}$.
For every $u \in \mathcal{N}$ with probability at least $1 - e^{-\Omega(\eps'^2 d)}$
the absolute value of the dot product of $u$ with a given $v_i$ is at most $\eps'$. As a result,
with probability at least $1 - (1 / \eps')^{O(k)} e^{-\Omega(\eps'^2 d)}$,
dot products between all elements of $\mathcal{N}$ and all $v_i$ are at most $\eps'$ in the absolute value.
But this implies that the dot products between all the unit vectors of $U$ and $V$ are at most $\eps' \sqrt{k}$.
So, by setting $\eps' = \eps / \sqrt{k}$ and by using the union bound, we get that we can set:
$$
\log |\Uc| \leq \Omega(\eps^2 d / k) - O(k (\log k + \log (1 / \eps))).
$$
Thus, we can set $\eps = d^{-0.49}$, and $k \leq d^{\sigma}$ for a sufficiently small positive $\sigma$,
which yields $|\Uc| = 2^{d^{\Theta(1)}}$.
\end{proof}

\restate{high_dimensional_separation}
\begin{proof}
The sets are defined as follows:
\begin{equation*}
S_{U, A} = \{x \in \Rbb^d \mid \mbox{for at least $0.9$-fraction of $1 \leq i \leq k$, one has $\langle x, u_i\rangle \in S_A$}\}
\end{equation*}
and
\begin{equation*}
S_{U, B} = \{x \in \Rbb^d \mid \mbox{for at least $0.9$-fraction of $1 \leq i \leq k$, one has $\langle x, u_i\rangle \in S_B$}\}.
\end{equation*}
The points $x \in S_{U, A}$ and $y \in S_{U, B}$ are well-separated, since in at least a $0.8$-fraction of $1 \leq i \leq k$, both $\langle x, u_i \rangle \in S_A$
and $\langle y, u_i \rangle \in S_B$. Since $S_A$ and $S_B$ are $\Omega(1 / \sqrt{m})$-separated, we obtain the result.

The bounds on the probabilities follow from the respective bounds in Lemma~\ref{one_dimensional_hard} and standard Chernoff bounds.
\end{proof}

\section{SQ lower bound}
\label{sq_lb_appendix}

\subsection{SQ lower bound}

Now let us show that if we set all the parameters appropriately, it is hard in the SQ model to learn a good classifier (robust or otherwise)
for distributions $D_{U, A}$ and $D_{U, B}$ defined above, where $U \in \Uc$ is an unknown subspace.
The main idea is to show that if the subspace $U \in \Uc$ is chosen uniformly at random,
unless we perform more than $2^{d^{\Omega(1)}}$ queries, we can not tell apart
$D_{U, A}$ or $D_{U, B}$ from the standard Gaussian $N(0, I_d)$ (and as a result, from each other).
Intuitively, any since query can only reliably distinguish $D_{U, A}$ from $N(0, I_d)$ for a tiny fraction of subspaces $U \in \Uc$. The result then follows by a simple counting argument.
To formalize the above intuition, we use an argument similar at a high-level to the one used in~\cite{DKS17}.

Let $D, D_1, D_2$ be distributions over $\Rbb^d$ with everywhere positive p.d.f.'s $P(x)$, $P_1(x)$,
and $P_2(x)$, respectively. Then, the pairwise correlation of $D_1$ and $D_2$ w.r.t.\ $D$, denoted
by $\chi_D(D_1, D_2)$, is defined as follows:
$$
\chi_D(D_1, D_2) = \int_{\Rbb^d} \frac{P_1(x) P_2(x)}{P(x)} \, dx - 1.
$$

In Section~\ref{correlation_section}, we show that for an appropriate setting of parameters
(namely, when $\eps m^{\Theta(1)} k \leq d^{-\Omega(1)}$), for every $U_1, U_2 \in \Uc$,
one has:
$$
\chi_{N(0, I_d)}(D_{U_1, A}, D_{U_2, A}) \leq
\begin{cases}
m^{O(k)} & \mbox{if $U_1 = U_2$}\\
m^{O(k)} \cdot d^{-\Omega(m)} & \mbox{otherwise}
\end{cases}
$$
and
$$
\chi_{N(0, I_d)}(D_{U_1, B}, D_{U_2, B}) \leq
\begin{cases}
m^{O(k)} & \mbox{if $U_1 = U_2$}\\
m^{O(k)} \cdot d^{-\Omega(m)} & \mbox{otherwise}.
\end{cases}
$$

Then by repeating the proof of Lemma~3.3 from~\cite{FGRVX}, we get that if the number of queries is significantly smaller
than:
$$
\frac{|\Uc| \cdot (\tau^2 - m^{O(k)} d^{-\Omega(m)})}{m^{O(k)}},
$$
then with high probability over a random subspace $U \in \Uc$, all the queries asked can be answered as if both $D_{U, A}$ and $D_{U, B}$ were $N(0, I_d)$.
As a result, we cannot distinguish them from $N(0, I_d)$ and, as a result, between each other.

Suppose that $m \log d > C k \log m$ for a sufficiently large constant
$C$, so that the $m^{O(k)}d^{-\Omega(m)}$ term is less than
$d^{-\Omega(m)} < m^{-\Omega(k)}$. Then we can set the precision
$\tau$ to $m^{-\Theta(k)}$ and still be unable to distinguish
$D_{U,A}$ from $D_{U,B}$ from
$|\Uc| m^{-O(k)} = 2^{d^{\Omega(1)}}m^{-O(k)}$ queries. If
$m^{O(k)} \leq 2^{d^{\sigma}}$ for a sufficiently small positive
$\sigma > 0$, this gives the desired lower bound of  $2^{d^{\Omega(1)}}$ on the
number of SQ queries the algorithm must ask.

\subsection{Upper bounding pairwise correlations}
\label{correlation_section}

In this section, we show how to upper bound
$\chi_{N(0, I_d)}(D_{U_1, A}, D_{U_2, A})$; upper bounding
$\chi_{N(0, I_d)}(D_{U_1, B}, D_{U_2, B})$ is exactly the same.
Denote $a(t) = \frac{A(t)}{G(t)} - 1$, where $G(t)$ is the p.d.f.\ of
a standard Gaussian. By Lemma~\ref{one_dimensional_hard}, one has
$\mathbb{E}_{t \sim N(0, 1)}[t^l \cdot a(t)] = 0$ for all $l \in \{1, 2, \dotsc, m\}$.

We assume that $m^C \eps k \leq d^{-\Omega(1)}$ for a sufficiently large constant $C$ to be determined later. Since by Lemma~\ref{many_subspaces} we can take $\eps=d^{-0.49}$, the required inequality
holds as long as $m$ and $k$ are at most small powers of $d$.

First, suppose that $U_1 = U_2 = U$. Suppose that $u_1, u_2, \ldots, u_d$
is an orthonormal basis of $\Rbb^d$ such that $u_1, u_2, \ldots, u_k$
is a fixed basis of $U$. Then,
\begin{align*}
\chi_{N(0, 1)^d}(D_{U, A}, D_{U, A})
& = \int_{\Rbb^d} \frac{A_U(x)^2}{\prod_{i=1}^d G(\langle x, u_i \rangle)} \, dx - 1\\
& = \int_{\Rbb^d} \prod_{i=1}^k (1 + a(\langle x, u_i \rangle))^2 \cdot \prod_{i=1}^d G(\langle x, u_i \rangle) \, dx - 1\\
& = \mathbb{E}_{x \sim N(0, I_d)}\left[\prod_{i=1}^k (1 + a(\langle x, u_i \rangle))^2\right] - 1\\
& = \prod_{i=1}^k \mathbb{E}_{x \sim N(0, I_d)}\left[(1 + a(\langle x, u_i \rangle))^2\right] - 1\\
& \leq m^{O(k)},
\end{align*}
where the fourth step is due to the independence of $\langle x, u_i \rangle$ (which is implied by orthogonality of $u_i$),
and the fifth step follows from Lemma~\ref{one_dimensional_hard}.

Now suppose that $U_1 \ne U_2$. Suppose that $u_1, u_2, \ldots, u_d$
is an orthonormal basis of $\Rbb^d$ such that $u_1, u_2, \ldots, u_k$
is a fixed basis of $U_1$, and, similarly, $v_1, v_2, \ldots, v_d$
is an orthonormal basis of $\Rbb^d$ such that $v_1, v_2, \ldots, v_k$
is a fixed basis of $U_2$. Now,
\begin{align}
\label{chi_expansion}
\chi_{N(0, I_d)}(D_{U_1, A}, D_{U_2, A}) &= \int \frac{A_{U_1}(x) A_{U_2}(x)}{\prod_{i=1}^d G(x_i)} \, dx - 1\nonumber\\
& = \mathbb{E}_{x \sim N(0, I_d)}\left[\prod_{i=1}^k \Bigl(1 + a(\langle x, u_i\rangle)\Bigr)\cdot \prod_{i=1}^k \Bigl(1 + a(\langle x, v_i\rangle)\Bigr)\right] - 1\nonumber\\
&=\sum_{S, T \subseteq [k]} \mathbb{E}_{x \sim N(0, I_d)}\left[\prod_{i\in S} a(\langle x, u_i\rangle)\cdot \prod_{i\in T} a(\langle x, v_i\rangle)\right] - 1\nonumber\\
&=\sum_{\begin{smallmatrix}S, T \subseteq [k]:\\S, T \ne \emptyset\end{smallmatrix}} \mathbb{E}_{x \sim N(0, I_d)}\left[\prod_{i\in S} a(\langle x, u_i\rangle)\cdot \prod_{i\in T} a(\langle x, v_i\rangle)\right],
\end{align}
where the last step follows from the fact that if $S = \emptyset$ and $T \ne \emptyset$, then the expression factorizes due to the independence
of $\langle x, v_i \rangle$, and we also use that $\mathbb{E}_{t \sim N(0, 1)}[a(t)] = 0$. The case $S \ne \emptyset$ and $T = \emptyset$ is similar.

Now let us fix non-empty $S, T \subseteq [k]$. W.l.o.g., suppose that $|S| \geq |T|$. Denote $\widetilde{v}_i = v_i - \mathrm{proj}_{U_1} v_i$.
Since $U_1, U_2 \in \Uc$ and $U_1 \ne U_2$, we have $\|\widetilde{v}_i - v_i\|_2 \leq \eps$.
One has for every
$1 \leq i \leq k$ by a Taylor expansion that
\begin{equation}
\label{taylor_expansion}
a(\langle x, v_i\rangle) = \sum_{l=0}^m a^{(l)}(\langle x, \widetilde{v}_i\rangle) \cdot \frac{\langle x, v_i - \widetilde{v}_i\rangle^l}{l!} + a^{(m+1)}(\theta_i) \cdot \frac{\langle x, v_i - \widetilde{v}_i\rangle^{m+1}}{(m+1)!},
\end{equation}
for some $\theta_i = \theta_i(x)$ that lies between $\langle x, \wt{v}_i\rangle$ and $\langle x, v_i\rangle$.

\begin{lemma}
\label{coeff_killing}
Suppose $|S| \geq |T|$.  For every $l \colon T \to \{0, 1, \ldots, m\}$, one has:
$$
\mathbb{E}_{x \sim N(0, I_d)}\left[\prod_{i \in S} a(\langle x, u_i\rangle)\cdot \prod_{i \in T} \left(a^{(l(i))}(\langle x, \widetilde{v}_i\rangle)\cdot \frac{\langle x, v_i - \widetilde{v}_i\rangle^{l(i)}}{l(i)!}\right)\right] = 0.
$$
\end{lemma}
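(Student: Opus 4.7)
The plan rests on two orthogonality facts: (i) $\widetilde{v}_i = v_i - \proj_{U_1} v_i$ is orthogonal to the subspace $U_1 = \mathrm{span}(u_1, \dotsc, u_k)$, so $\langle x, \widetilde{v}_i\rangle$ is uncorrelated with every $\langle x, u_j\rangle$; and (ii) $a = A/G - 1$ is $L^2(G)$-orthogonal to every polynomial of degree at most $m$, because $D_A$ matches $N(0,1)$ in its first $m$ moments by Lemma~\ref{one_dimensional_hard}, so $\mathbb{E}_{y \sim N(0,1)}[a(y) y^l] = 0$ for $l = 0, 1, \dotsc, m$. Introduce the shorthand $y_j := \langle x, u_j\rangle$ for $j \in [k]$ and $z_i := \langle x, \widetilde{v}_i\rangle$ for $i \in T$. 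Since $x \sim N(0, I_d)$, joint Gaussianity plus zero covariance promotes (i) to independence of the $y$-block from the $z$-block, with $(y_j)_{j \in [k]}$ i.i.d.\ $N(0,1)$.

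Writing $v_i - \widetilde{v}_i = \proj_{U_1} v_i = \sum_{j=1}^k \langle v_i, u_j\rangle u_j$ gives the key linearization $\langle x, v_i - \widetilde{v}_i\rangle = \sum_{j=1}^k \langle v_i, u_j\rangle y_j$, a linear form in the $y$'s alone. Factoring the expectation over the independent $y$ and $z$ blocks, the claimed expression equals
\[
  \mathbb{E}_z\Bigl[\prod_{i \in T} a^{(l(i))}(z_i)\Bigr] \cdot \mathbb{E}_y\Bigl[\prod_{i \in S} a(y_i) \cdot P(y)\Bigr],
\]
where $P(y) := \prod_{i \in T} \frac{1}{l(i)!}\bigl(\sum_{j=1}^k \langle v_i, u_j\rangle y_j\bigr)^{l(i)}$ is a polynomial in $y_1, \dotsc, y_k$ of total degree $d_P := \sum_{i \in T} l(i) \le m|T|$. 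It thus suffices to show the second factor vanishes.

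Expand $P(y) = \sum_\alpha c_\alpha \prod_{j=1}^k y_j^{\alpha_j}$; each monomial has $\sum_j \alpha_j \le d_P$. By coordinate-wise independence of $y_1, \dotsc, y_k$,
\[
  \mathbb{E}_y\Bigl[\prod_{i \in S} a(y_i) \prod_{j=1}^k y_j^{\alpha_j}\Bigr] = \prod_{j \in S} \mathbb{E}[a(y_j) y_j^{\alpha_j}] \cdot \prod_{j \notin S} \mathbb{E}[y_j^{\alpha_j}].
\]
Now comes the only substantive step, which is where the hypothesis $|S| \ge |T|$ is used: since $\sum_{j \in S} \alpha_j \le \sum_j \alpha_j \le m|T| \le m|S|$, pigeonhole supplies an index $j^{\ast} \in S$ with $\alpha_{j^{\ast}} \le m$, and fact (ii) then forces $\mathbb{E}[a(y_{j^{\ast}}) y_{j^{\ast}}^{\alpha_{j^{\ast}}}] = 0$, killing that monomial. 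Summing over $\alpha$ yields the lemma. There is no real obstacle beyond spotting this pigeonhole: it converts the total-degree bound $d_P \le m|T|$ into a per-coordinate bound $\alpha_{j^{\ast}} \le m$ for some $j^{\ast} \in S$, which is precisely the regime where the moment-matching of Lemma~\ref{one_dimensional_hard} applies.
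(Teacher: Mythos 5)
Your proposal is correct and follows essentially the same route as the paper's proof: expand $\langle x, v_i - \widetilde{v}_i\rangle^{l(i)}$ as a polynomial in the coordinates $\langle x, u_j\rangle$, use the hypothesis $|S| \geq |T|$ and pigeonhole on the total degree $\sum_{i\in T} l(i) \leq m|T| \leq m|S|$ to find $j^* \in S$ whose exponent is at most $m$, and kill that monomial via independence and the moment-matching identity $\mathbb{E}_{t\sim N(0,1)}[a(t)t^l]=0$ for $0 \leq l \leq m$. The only (harmless, arguably cleaner) difference is that you factor out the $\langle x,\widetilde{v}_i\rangle$-block as a separate expectation up front, whereas the paper only factors out the single coordinate $\langle x, u_{j^*}\rangle$.
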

\begin{proof}
Since $v_i - \widetilde{v}_i \in U_1$, we can write $v_i - \widetilde{v}_i = \sum_{j=1}^k \alpha_{ij} u_j$.
One has:
\begin{align*}
& \mathbb{E}_{x \sim N(0, I_d)}\left[\prod_{i \in S} a(\langle x, u_i\rangle)\cdot \prod_{i \in T} \left(a^{(l(i))}(\langle x, \widetilde{v}_i\rangle)\cdot \frac{\langle x, v_i - \widetilde{v}_i\rangle^{l(i)}}{l(i)!}\right)\right]\\
& = \mathbb{E}_{x \sim N(0, I_d)}\left[\prod_{i \in S} a(\langle x, u_i\rangle)\cdot \prod_{i \in T} \left(a^{(l(i))}(\langle x, \widetilde{v}_i\rangle)\cdot \frac{\left(\sum_{j=1}^k \alpha_{ij} \langle x, u_j \rangle \right)^{l(i)}}{l(i)!}\right)\right]\\
& = \mathbb{E}_{x \sim N(0, I_d)}\left[\prod_{i \in S} a(\langle x, u_i\rangle)\cdot \prod_{i \in T} \left(a^{(l(i))}(\langle x, \widetilde{v}_i\rangle)\cdot \frac{1}{l(i)!} \cdot \sum_{\beta_{ij}: \sum_{j=1}^k \beta_{ij} = l(i)} \binom{l(i)}{\beta_{i1} \ldots \beta_{ik}} \prod_{j=1}^k (\alpha_{ij} \langle x, u_j \rangle)^{\beta_{ij}}\right)\right]\\
& = \sum_{\substack{\beta_{ij}\\\forall i: \sum_{j=1}^k \beta_{ij} = l(i)}} \left(\prod_{i \in T} \frac{\binom{l(i)}{\beta_{i1} \ldots \beta_{ik}}}{l(i)!}\right)\cdot \mathbb{E}_{x \sim N(0, I_d)}\left[\prod_{i \in S} a(\langle x, u_i\rangle)\cdot \prod_{i \in T} \left(a^{(l(i))}(\langle x, \widetilde{v}_i\rangle)\cdot \prod_{j=1}^k (\alpha_{ij} \langle x, u_j \rangle)^{\beta_{ij}}\right)\right].\\
\end{align*}
Now let us fix partitions $\beta_{ij}$ and show that:
\begin{equation}
\label{large_zero}
\mathbb{E}_{x \sim N(0, I_d)}\left[\prod_{i \in S} a(\langle x, u_i\rangle)\cdot \prod_{i \in T} \left(a^{(l(i))}(\langle x, \widetilde{v}_i\rangle)\cdot \prod_{j=1}^k(\alpha_{ij} \langle x, u_j \rangle)^{\beta_{ij}}\right)\right] = 0.
\end{equation}
Since $\sum_{ij} \beta_{ij} = \sum_i l(i) \leq |T| \cdot m$, there exists $j^* \in S$ such that:
$\sum_i \beta_{ij^*} \leq \frac{|T| \cdot m}{|S|} \leq m$.
Since $\langle x, u_{j^*}\rangle$ is independent from the remaining dot products,
we can factor from~(\ref{large_zero}) the expression
\begin{equation}
\label{small_zero}
\mathbb{E}_{x \sim N(0, I_d)}[a(\langle x, u_{j^*}\rangle) \langle x, u_{j^*}\rangle^l]
\end{equation}
with $l \leq m$. But since $\langle x, u_{j^*} \rangle$ is distributed as $N(0, 1)$,
one has that~(\ref{small_zero}) is equal to zero due to Lemma~\ref{one_dimensional_hard}.
\end{proof}

Let us continue upper bounding~(\ref{chi_expansion}). For $i \in T$ and $0 \leq j \leq m + 1$, denote:
$$
\gamma_{ij} = \begin{cases}
v_i - \widetilde{v}_i, \mbox{if $j \leq m$},\\
\theta_i, \mbox{if $j = m + 1$}.
\end{cases}
$$
One has:
\begin{align}
\label{main_step}
&\mathbb{E}_{x \sim N(0, I_d)}\left[\prod_{i\in S} a(\langle x, u_i\rangle)\cdot \prod_{i\in T} a(\langle x, v_i\rangle)\right]
\nonumber\\&= \sum_{l \colon T \to \{0, 1, \ldots, m + 1\}} \mathbb{E}_{x \sim N(0, I_d)}\left[\prod_{i \in S} a(\langle x, u_i\rangle)\cdot
\prod_{i \in T} a^{(l(i))}(\gamma_{i,l(i)}) \cdot \frac{\langle x, v_i - \widetilde{v}_i \rangle^{l(i)}}{l(i)!}\right]
\nonumber\\&=
\sum_{\begin{smallmatrix}l \colon T \to \{0, 1, \ldots, m + 1\}\\ l^{-1}(m+1)\ne \emptyset\end{smallmatrix}} \mathbb{E}_{x \sim N(0, I_d)}\left[\prod_{i \in S} a(\langle x, u_i\rangle)\cdot\prod_{i \in T} a^{(l(i))}(\gamma_{i, l(i)}) \cdot \frac{\langle x, v_i - \widetilde{v}_i \rangle^{l(i)}}{l(i)!}\right]
\nonumber\\&\leq
\sum_{\begin{smallmatrix}l \colon T \to \{0, 1, \ldots, m + 1\}\\ l^{-1}(m+1)\ne \emptyset\end{smallmatrix}}
(\sup_{t} |a(t)|)^{|S|} \cdot \left(\prod_{i \in T} \frac{\sup_t |a^{l(i)}(t)|}{l(i)!}\right) \cdot \mathbb{E}_{x \sim N(0, I_d)}\left[\prod_{i \in T}\|\mathrm{proj}_{U_1} x\|_2^{l(i)} \cdot \|v_i - \widetilde{v}_i\|_2^{l(i)}\right]
\nonumber\\&\leq
\sum_{\begin{smallmatrix}l \colon T \to \{0, 1, \ldots, m + 1\}\\ l^{-1}(m+1)\ne \emptyset\end{smallmatrix}}
m^{O(|S|)} \cdot \left(\prod_{i \in T} \frac{m^{O(l(i))} \cdot \eps^{l(i)}}{l(i)!}\right) \cdot \mathbb{E}_{y \sim N(0, I_{k})} \left[\|y\|_2^{\sum_{i \in T} l(i)}\right]
\nonumber\\&\leq
\sum_{\begin{smallmatrix}l \colon T \to \{0, 1, \ldots, m + 1\}\\ l^{-1}(m+1)\ne \emptyset\end{smallmatrix}}
\frac{m^{O\left(|S| + \sum_{i\in T} l(i)\right)} \cdot \eps^{\sum_{i \in T} l(i)} \cdot \left(k + \sum_{i \in T} l(i)\right)^{\sum_{i \in T} l(i)}}{\prod_{i \in T} l(i)!}
\nonumber\\&\leq
\sum_{\begin{smallmatrix}l \colon T \to \{0, 1, \ldots, m + 1\}\\ l^{-1}(m+1)\ne \emptyset\end{smallmatrix}}
\frac{m^{O\left(k\right)}d^{-\Omega\left(\sum_{i\in T} l(i)\right)}}{\prod_{i \in T} l(i)!}
\nonumber\\&\leq m^{O(k)} \cdot d^{-\Omega(m)},
\end{align}
where the first step follows from~(\ref{taylor_expansion}), the second
step follows from Lemma~\ref{coeff_killing}, the third step follows
from Cauchy--Schwartz, the fourth step follows from
Lemma~\ref{one_dimensional_hard} and from the bound
$\|v_i - \widetilde{v}_i\|_2 \leq \eps$, the fifth step follows from
the inequality
$\mathbb{E}_{y \sim N(0, I_k)}[\|y\|_2^s] \leq (k + s)^s$, the sixth
step follows from $(\eps m^{\Theta(1)} k) = d^{-\Omega(1)}$ and from
$\sum_{i \in T} l(i) \leq O(mk)$, and the last step follows from
dropping the denominators, the sum having at most
$(m+2)^{|T|} = m^{O(k)}$ terms, and that $\sum_i l(i) \geq m + 1$.

Plugging~(\ref{main_step}) into~(\ref{chi_expansion}), we get the result.

\subsection{Setting parameters}

We obtain a $\Omega(\sqrt{k / m})$-robust classifier, and the precision of statistical queries can be as high as $m^{O(k)} \cdot d^{-\Omega(m)}$. Thus, for $0 < \gamma < 1/10$, we can set $m = d^{\Theta(\gamma)}$
and $k \ll \frac{m \log d}{\log m}$. As a result we get robustness
$\Omega(\sqrt{k / m}) = \Omega(\sqrt{\log d / \log m}) = \Omega(\sqrt{1 / \gamma})$,
and the precision of statistical queries can be as good as $2^{-d^{\Omega(\gamma)}}$.

\section{Bound on covering number of generative models}\label{app:gencover}
\begin{lemma}\label{lem:gwlipschitz}
  Let $g_w$ be a $\ell$-layer neural network architecture with at most
  $d$ activations in each layer and Lipschitz nonlinearities such as
  ReLUs.  Then
  \[
    \norm{g_w(x) - g_{w'}(x)}_2 \leq \norm{w - w'}_1 \cdot \norm{x}_2 \cdot (dB)^{\ell}
  \]
\end{lemma}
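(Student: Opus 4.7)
The idea is a standard hybrid argument layer by layer, where at each step one exchanges the weights of a single layer from $w$ to $w'$ and bounds the resulting perturbation by propagating it through the remaining layers. Since the lemma is invoked in the context of Lemma~\ref{lem:gencover}, I will assume (as is implicit) that $\|w\|_\infty, \|w'\|_\infty \leq B$. Write $g_w = f_\ell^{w_\ell} \circ f_{\ell-1}^{w_{\ell-1}} \circ \cdots \circ f_1^{w_1}$ where each layer has the form $f_i^{w_i}(z) = \sigma_i(W_i z)$ for some 1-Lipschitz nonlinearity $\sigma_i$ with $\sigma_i(0)=0$ (biases can be absorbed as extra weights acting on a constant input), and $W_i$ is the matrix of weights for layer $i$, of dimension at most $d \times d$ with entries bounded by $B$ in absolute value.

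The first step is to control the norm of intermediate activations. Let $h_i(x)$ denote the output after $i$ layers under weights $w$. Since $\sigma_i$ is 1-Lipschitz with $\sigma_i(0)=0$, one has $\|h_i(x)\|_2 \leq \|W_i\|_{\mathrm{op}} \cdot \|h_{i-1}(x)\|_2 \leq dB \cdot \|h_{i-1}(x)\|_2$, using $\|W_i\|_{\mathrm{op}} \leq \|W_i\|_F \leq dB$. Iterating gives $\|h_i(x)\|_2 \leq (dB)^i \|x\|_2$. The same bound holds for intermediate activations computed under $w'$, and shows that each single layer (with fixed weights in $[-B,B]$) is $(dB)$-Lipschitz in its input.

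Next, introduce hybrids $g^{(i)}(x)$ that apply layers $1,\ldots,i$ with weights $w$ and layers $i+1,\ldots,\ell$ with weights $w'$, so $g^{(0)} = g_{w'}$ and $g^{(\ell)} = g_w$. Then $g_w(x) - g_{w'}(x) = \sum_{i=1}^{\ell} \bigl(g^{(i)}(x) - g^{(i-1)}(x)\bigr)$. At the $i$-th term, the two hybrids share the first $i-1$ layers (under $w$), produce the same intermediate vector $z := h_{i-1}(x)$ of norm at most $(dB)^{i-1}\|x\|_2$, then differ only in layer $i$ (applying $W_i$ versus $W_i'$), and afterward apply the same $\ell - i$ layers under $w'$. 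The perturbation introduced at layer $i$ is bounded by
\[
\|W_i z - W_i' z\|_2 \leq \|W_i - W_i'\|_F \cdot \|z\|_2 \leq \|w_i - w_i'\|_1 \cdot (dB)^{i-1}\|x\|_2,
\]
using that the Frobenius norm is dominated by the entrywise $\ell_1$ norm and the preceding norm bound on $z$. Propagating this perturbation through the remaining $\ell - i$ layers (each $(dB)$-Lipschitz under $w'$) multiplies it by at most $(dB)^{\ell - i}$, giving
\[
\|g^{(i)}(x) - g^{(i-1)}(x)\|_2 \leq (dB)^{\ell-1} \cdot \|w_i - w_i'\|_1 \cdot \|x\|_2.
\]

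Summing over $i$ and using $\sum_i \|w_i - w_i'\|_1 = \|w - w'\|_1$ yields the claimed bound (with an extra factor of $dB$ to spare). The only delicate point is the choice of matrix norm: one must be careful to use $\|W_i - W_i'\|_F \leq \|w_i - w_i'\|_1$ rather than, say, the operator norm, but this is a routine linear-algebra fact and the proof goes through cleanly.
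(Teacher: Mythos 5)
Your proof is correct and follows essentially the same argument as the paper's: a hybrid/triangle-inequality decomposition over weight changes, the activation bound $\|h_i(x)\|_2\leq (dB)^i\|x\|_2$ via $\|W_i\|_{\mathrm{op}}\leq\|W_i\|_F\leq dB$, and propagation of the perturbation through the remaining $(dB)$-Lipschitz layers. The only (immaterial) difference is that the paper hybridizes one coordinate at a time while you hybridize one layer at a time, which just requires the extra routine observation $\|W_i-W_i'\|_F\leq\|w_i-w_i'\|_1$ that you correctly supply.
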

\begin{proof}
  By the triangle inequality, it suffices to consider $w$ and $w'$
  that differ in a single coordinate.  Suppose this coordinate is in
  layer $i$.  Since each layer's weight matrix $w_i$ has
  $\norm{w_i} \leq \norm{w_i}_F \leq dB$, and the initial layer has
  activation $\norm{x}_2$, the $\ell_2$ norm of the activations in the
  $i$th layer is at most $\norm{x}_2 (dB)^{i}$.  Therefore the change
  in activation in layer $i+1$ is at most
  $\norm{w - w'}_1 \cdot \norm{x}_2 (dB)^{i}$, and the change in the
  last layer is at most $\norm{w - w'}_1 \cdot \norm{x}_2 (dB)^{\ell-1}$.
\end{proof}

\restate{lem:gencover}
\begin{proof}
  First, consider any $w \in [-B, B]^m$ and $x \in \R^k$, and let $w'$
  differ from $w$ in a single weight.

  For some parameter $\alpha > 0$, we consider the net
  $\wt{\cN} = \{D(g_w) \mid w \in [-B, B]^m \cap \alpha
  \mathbb{Z}^m\}$. Our cover of $\D$ will be $\cN \times \cN$.  This
  has size $(1 + \frac{2B}{\alpha})^{2m}$, which is sufficiently small as
  long as $\alpha = (dB/\delta)^{-O(\ell)}$.

  It suffices to show for each $D \in \D$ and $i \in \{0, 1\}$ that
  $D_i \in U_{\eps+\delta, \delta}(\wt{D})$ for some
  $\wt{D} \in \wt{\cN}$.  Let $w^*$ be the $w$ for which
  $W_1(D_i, D(g_w)) \leq \eps$ and $\wh{w}$ be the nearest $w$ in our
  cover, so $\norm{\wh{w} - w^*}_\infty \leq \alpha$.  Then for any
  $x \in \R^k$ with $\norm{x}_2 \leq \sqrt{k}/\delta$,
  \[
    \norm{g_{\wh{w}}(x) - g_{w^*}(x)}_2 \leq \delta
  \]
  by Lemma~\ref{lem:gwlipschitz} and our chosen $\alpha$.  Since
  $\norm{x}_2 \leq \sqrt{k}/\delta$ with probability much higher than
  $1-\delta$, this implies
  $D(g_{w^*}) \in U_{\delta, \delta}(D(g_{\wh{w}}))$.  The triangle
  inequality then gives
  $D_i \in U_{\eps+\delta, \delta}(D(g_{\wh{w}}))$ as desired.
\end{proof}

\end{document}